\newcolumntype{d}{D{.}{.}{-1}}
\newcommand{\MC}{\multicolumn}
  \providecommand\BibTeX{{%
    \normalfont B\kern-0.5em{\scshape i\kern-0.25em b}\kern-0.8em\TeX}}}
\newcommand{\withcomment}{0}
\newcommand{\hecom}{\if1\withcomment \color{blue} 
    \fi 
    \if0\withcomment \fi}
\begin{document}

\title[Algorithmic Fairness and Vertical Equity]{Algorithmic Fairness and Vertical Equity: \\ Income Fairness with IRS Tax Audit Models}




\author{Emily Black}
\authornote{Both authors contributed equally to this research. The views expressed in this paper do not necessarily represent the official position of the U.S. Department of the Treasury. 
This research was conducted using funding from the Institute for Human-Centered Artificial Intelligence (HAI) at Stanford and Arnold Ventures. The authors thank Edie Brashares, Bob Gillette, John Guyton, Tom Hertz, Barry Johnson, and Alex Turk for useful feedback and guidance and Brandon Anderson and Evelyn Smith for technical assistance.}
\email{emilybla@andrew.cmu.edu}
\affiliation{%
  \institution{Carnegie Mellon University}
  \country{USA}
 }

\author{Hadi Elzayn}
\authornotemark[1]
\affiliation{%
  \institution{Stanford University}
  \country{USA}}
\email{hselzayn@law.stanford.edu}

\author{Alexandra Chouldechova}
\authornote{Equal co-supervision}
\affiliation{%
  \institution{Carnegie Mellon University}
  \country{USA}
}

\author{Jacob Goldin}
\authornotemark[2]
\affiliation{%
  \institution{Stanford University and U.S. Treasury Department}
  \country{USA}
}

\author{Daniel E. Ho}
\authornotemark[2]
\affiliation{%
  \institution{Stanford University}
  \country{USA}
}
\newcommand{\w}{\mathbf{w}}

\renewcommand{\a}{\mathbf{a}}
\renewcommand{\w}{\mathbf{w}}

\renewcommand{\shortauthors}{Black and Elzayn, et al.}

\begin{abstract}
This study examines issues of algorithmic fairness in the context of systems that inform tax audit selection by the United States Internal Revenue Service (IRS).  While the field of algorithmic fairness has developed primarily around notions of treating like individuals alike, we instead explore the concept of \emph{vertical equity}---appropriately accounting for relevant differences across individuals---which is a central component of fairness in many public policy settings. 
Applied to the design of the U.S. individual income tax system, vertical equity relates to the fair allocation of tax and enforcement burdens across taxpayers of different income levels.  Through a unique collaboration with the Treasury Department and IRS, we use access to detailed, anonymized individual taxpayer microdata, risk-selected audits, and random audits from 2010-14 to study vertical equity in tax administration.  In particular, we assess how the adoption of modern machine learning methods for selecting taxpayer audits may affect vertical equity. Our paper makes four contributions. First, we show how the adoption of more flexible machine learning (classification) methods---as opposed to simpler models
---shapes vertical equity by shifting audit burdens from high to middle-income taxpayers. Second, given concerns about high audit rates of low-income taxpayers, we investigate how existing algorithmic fairness techniques would change the audit distribution. We find that such methods can mitigate some disparities across income buckets, but that these come at a steep cost to 
performance.
Third, we show that the choice of whether to treat risk of underreporting as a classification or regression problem is highly consequential.  Moving from a classification approach to a regression approach to predict the expected magnitude of underreporting shifts the audit burden substantially toward high income individuals, while increasing revenue.  Last, we investigate the role of differential audit cost in shaping the distribution of audits. Audits of lower income taxpayers, for instance, are typically conducted by mail and hence pose much lower cost to the IRS. We show that a narrow focus on return-on-investment can undermine vertical equity. Our results have implications for ongoing policy debates and the design of algorithmic tools across the public sector.  
\end{abstract}

\maketitle 

The annual tax gap, namely the difference between taxes owed and taxes paid, is estimated to be \$440B in the United States~\cite{tax_gap}.  Audits are the principal mechanism by which the Internal Revenue Service (IRS), the agency responsible for tax collection, verifies tax compliance and deters non-compliance.  IRS resources are limited and the agency must use audits judiciously.  During audits, the IRS typically solicits additional information from taxpayers to support information reported on filed returns.  For the taxpayer, audits can be time-consuming, stressful, and costly~\cite{macnabb,kiel2018s}. Low-income taxpayers, for whom refunds can comprise a substantial part of income, may wait ``on their refunds to pay day-to-day living expenses such as rent, car repairs, or healthcare, and any delay can cause taxpayers significant hardship"~\cite{taxpayer_adv}. 

Since the 1970s, the IRS has used classification models as part of its audit selection process to detect which individuals are most likely to have misreported their tax liability. While the use of both classical and modern machine-learning models is foundational to many government agencies' efforts to modernize predictive and allocative tasks \cite{engstrom2020government}, the adoption of such tools comes with considerable risks.  The algorithmic fairness literature has amply documented how disparate impact and other negative outcomes can arise from the uncritical adoption and application of such models~\cite{angwin2016machine, healthcare,dastin2018amazon}.   Given the scale and impact government decisions may have, mitigating these risks is a key priority for researchers and policy \cite{GAO, bideneo}.   In this work we study the impact of, and safeguards for,  fairness of machine learning models in the IRS tax audit context.

Specifically, our analysis focuses on fairness defined in terms of vertical equity, namely, appropriately accounting for relevant differences across individuals. This notion is central to public finance and public policy. By contrast, the algorithmic fairness literature has developed many formal definitions of fairness and techniques to satisfy notions of horizontal equity (treating like individuals alike) \cite{hardt2016equality,dwork2012fairness,kusner2017counterfactual}. The applicability of these techniques to improve vertical equity has been little-explored. More generally, the literature on how to apply algorithmic fairness techniques to improve real-world systems remains in a nascent stage, especially in high-stakes policy settings where direct data and systems access can be challenging.
Using anonymized IRS microdata, our work 
(i) examines the applicability of existing methods for promoting 
vertical equity in the tax audit context,
(ii) introduces new algorithmic fairness problems 
motivated by vertical equity considerations, and
(iii) provides a case study of addressing vertical equity concerns in a real-world algorithmic decision system.  By introducing vertical equity to algorithmic fairness, we follow in the footsteps of others \cite{hutchinson201950,heidari2019moral,binns2018fairness} that situate fairness in broader frameworks.


Our point of departure and the key motivation for our study is summed up in two key observations that, taken together, point to a discrepancy between the distribution of misreporting compared to the distribution of audits:
(1) the audit rate for lower-to-middle income earners is often as high or higher in recent recent years than that of high income earners; yet (2) an analysis of randomly conducted audits reveals that the amount of misreported tax liability (which we refer to, interchangeably, as the ``misreport amount'' or ``adjustment'') is highest among the highest income earners and the rate of misreporting---defined as misreporting above \$200---increases roughly monotonically with income.  
With this context, our key research questions are as follows: \\
    \indent (1) \textbf{To what extent does the choice of audit selection algorithm affect the noted discrepancy?} Given the discrepancy between ground truth misreporting and audit allocations, we might expect that introducing a more accurate model may mitigate the issue.  
    However, we observe empirically that more flexible models, while indeed increasing accuracy, have the effect of even \textit{further} concentrating of the audit burden on the lower-to-middle income taxpayers.\\
    \indent (2) \ \textbf{Can existing algorithmic fairness methods, originally designed to promote horizontal equity, be applied to improve vertical equity?}  
    In our context, one conception of  vertical equity consists of  monotonicity of the audit rate with respect to income. We show that, under some conditions, a selection process\footnote{By `selection process,' we mean the prediction model and 
    the process by which predictions are used to allocate audits together.} that satisfies the well-known fairness metrics of equal true positive rates and equalized odds also requires monotonicity of the audit rate with respect to the misreport rate. Given our empirical findings, this also implies monotonicity with respect to income. 
    We thus divide taxpayers into  \emph{income buckets}  
    and explore to what extent conventional fairness methods applied to such buckets can resolve the apparent discrepancy  between the audit rate and misreporting. We show that such methods come at a steep cost to revenue.  \\
    \indent (3) \textbf{What techniques can we use to more directly address vertical equity in the IRS audit allocation context?}
       We implement a direct approach to achieve 
       monotonicity by imposing allocation constraints on model outputs, and find that this approach results in a modest cost to revenue. However, we find that 
       switching the prediction task from classification to regression not only also achieves a roughly monotonic shape, closely matching the audit distribution of an \emph{oracle} with knowledge of the true misreport amount,
       but also obtains \emph{significantly more revenue} than even unconstrained classification. This is because regression shifts focus to taxpayers likely to have high amounts of underreporting rather than simply high probabilities of a misreport. \\
    \indent (4) \textbf{Can differential audit costs explain the status quo mismatch?} 
    We show that fully optimizing for return-on-investment with respect to the IRS' \emph{audit costs} concentrates audits nearly exclusively on lower income taxpayers, 
    even when using predictions arrived at via regression. This suggests that IRS budgetary constraints may play an important role in shaping the agency's ability to more equitably allocate audits without sacrificing the detection of under-reported taxes.  A narrow focus on return-on-investment can seriously undermine vertical equity goals.

A major contribution of this paper is that we conduct all our experiments on real, detailed, audit data collected by the IRS. We view this collaboration as an important case study to assess and mitigate disparities in real-world, public sector settings that operate subject to binding operational constraints  \cite[see][]{holstein2019improving,brown2019toward,lamba2021empirical,Obermeyer447,geyik2019fairness}. 
Our primary dataset consists of a stratified random sample of taxpayers collected as part of the IRS' National Research Program (NRP), allowing us to avoid the selective labels problem~\cite{lakkaraju2017selective}, to draw inferences on a representative dataset, and to directly measure the risk of misreporting. Our work also connects to work that emphasizes the choice of prediction task~\cite{Obermeyer447, mullainathan2021inequity} and problem formulation~\cite{passi2019problem} for algorithmic fairness. In addition, our results speak to current policy debates about the fairness of tax administration \cite{kiel} and appropriate funding levels for the IRS~\cite{national_taxpayer_advocate}. 

The paper proceeds as follows. Section~\ref{sec:background} provides background on the U.S. tax system and spells out the motivating stylized facts, setting up the question of what the IRS's turn to machine learning may portend for vertical equity. Section~\ref{sec:data} provides background on data and key definitions.  Section~\ref{sec:fairness}  formally describes the audit problem, introduces notation, and discusses how extant fairness metrics might apply to the IRS context. Our main investigation is presented in four parts. First,  Section~\ref{sec:moreflexible} examines the impact of more powerful classifiers on audit distribution. Second, Section~\ref{sec:fairlearn} presents the results of applying established algorithmic fairness techniques in our setting. Third, Section~\ref{sec:changing} studies the incorporation of monotonicity constraints as well as the simple but fundamental change of switching from classification to regression. Fourth, Section~\ref{sec:cost} examines the implications of accounting for audit costs.  Section~\ref{sec:discussion} concludes.


\vspace{-3mm}
\section{Background on the US Tax System\label{sec:background}}

We examine individual federal income taxes in the US system. Taxes are assessed based on self-reported liability statements called \emph{tax returns}, which can be time consuming and complicated to prepare; many taxpayers use commercial software or paid preparers.
The tax rate on income is progressive, with marginal tax rates increasing in income. 

As the tax code is very complicated, 
taxpayers {\hecom (and their preparers \cite{gaopreparers})} often make errors when calculating the amount they owe and are thus inadvertently non-compliant; others are willfully non-compliant, i.e., evade paying taxes.
The annual gross tax gap, which measures total noncompliance, is approximately \$440B~\cite{tax_gap}. In order to recover lost revenue, and to promote compliance with the income tax law, the IRS audits individuals that it believes may not be paying their full owed tax---due to, e.g., erroneously claiming credits or under-reporting income.

The IRS' audit selection system is complex, with many parts. It principally relies on: 
{\hecom (i) algorithmic methods to predict which taxpayers are most likely to underreport taxes, which serves as our main focus, (ii)  a combination of simple rules that flag returns automatically}; and, to a lesser extent, (iii) tips and other third party information, such as from whistleblowers.
We focus on the algorithmic component of the IRS audit selection process, which has historically been a classification algorithm predicting individual taxpayer misreport~\cite{hunter1996irs}. The details of existing modeling approaches are confidential, but historically, the basic approach involves a form of linear discriminant analysis. 

{\hecom
Audits are conducted in different ways depending on the size and scope of issues identified. Some audits, including most involving the Earned Income Tax Credit (EITC), are conducted by mail at relatively low cost to the IRS. More complicated and extensive audits may be conducted by interview or by IRS examiner field visits. The timing of an audit relative to the processing of a return also varies. For instance, audits may be conducted on taxpayers claiming refunds before a check is sent out; this is known as revenue protection, and such audits are called ``pre-refund''. Audits occurring after a check has been sent out to, or received from, the taxpayer are known as ``post-refund.'' These timing distinctions create differential impact on taxpayers, and may also affect the ease with which the IRS conducts audits.}

Over the last eight years, budget cuts have decreased the audit rate, from an overall rate of 1\% of individual filings receiving audits in 2010 to just 0.5\% in 2016~\cite{compliance_presence}. The audit rate has decreased most significantly for individuals earning between \$1-5M. Such individuals were audited at a rate of $\approx$8\% in 2010 but just 2.2\% in 2016~\cite{compliance_presence}. These changes in audit rates correspond to disproportionate reductions in examiners with more specialized expertise: while there was a 15\% reduction in examiners conducting correspondence audits (i.e. audits by mail) from 2010 to 2019, there was a 25-40\% reduction in examiners conducting in-person audits, which are utilized more for higher-income individuals~\cite{budget_dropoff}.

\begin{figure}[tbh]
   \centering
    \includegraphics[scale=0.225]{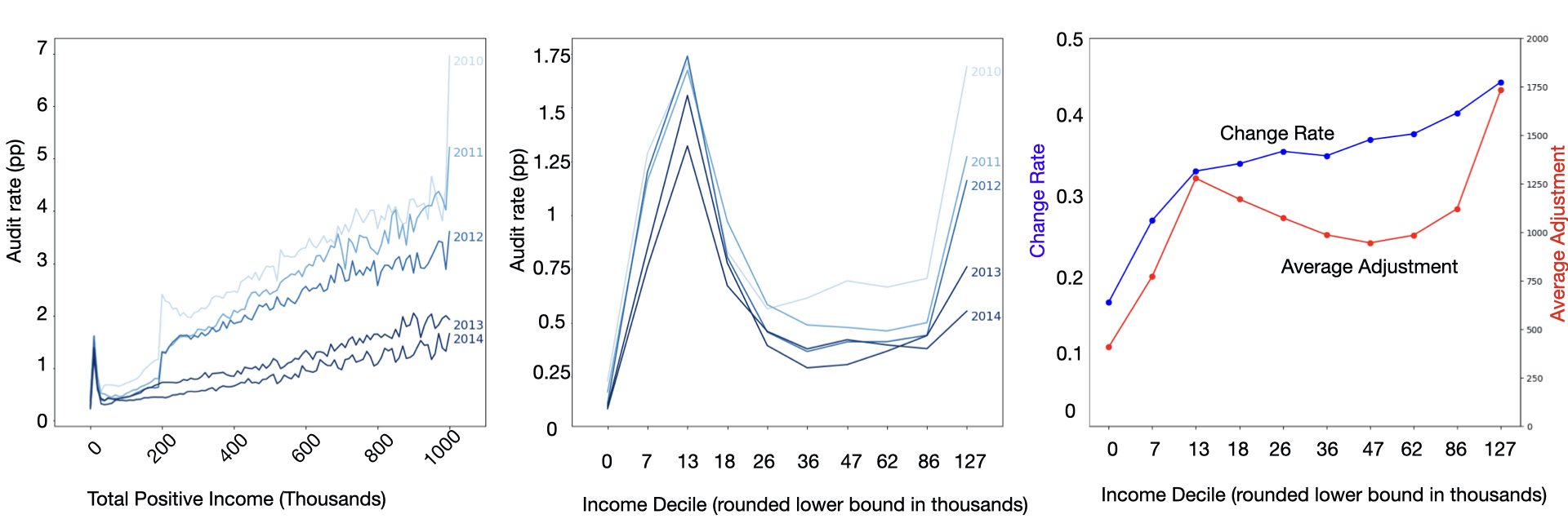}

    \label{fig:nrp-outcomes}
    \caption{Left two graphs: Audit Rate vs. Total Positive Income over time. Both of these graphs are calculated on operational audit (OP) data. Each line of a different color represents a different year, from 2010 to 2014. The x-axis indicates income binned into buckets of income, while the y-axis is the fraction of taxpayers in each bucket audited. On the leftmost, we have reported income buckets of \$10,000, up \$1m, while on the second graph, we show the same analysis over reported income deciles. Note that as the 10th income decile starts at 127K,  this graph is comparatively compressed.
    Right: Ground truth rates of misreporting (over \$200) (left) and average amount of misreporting conditional on misreport, aka average adjustment (over \$ 200), (right) over income. The results here are presented over  five years of NRP data 2010-2014, adjusted to 2014 dollars. The x-axis denotes income deciles, and the y-axis denotes rate of misreporting and average amount of misreporting in dollars, respectively. Taken together, we can see that there is a mismatch between audit allocation and ground truth noncompliance. 
    }
    \label{fig:motivation}
\end{figure} 

\vspace{-3mm}
\subsection{{\hecom Motivating Facts}}
\label{sec:puzzle}

We highlight two motivating facts relevant to our investigation. First, in the most recent years, the lowest income earners have been audited at the same rate as the highest income earners. The left panel of Figure~\ref{fig:motivation} plots income in \$10K bins from \$0 to \$1M
on the x-axis against the audit rate on the y-axis. Each line represents one year, from 2010 in lightest to 2014 in darkest blue. This panel shows the clear trend of the declining overall audit rate over time, which affects higher income groups most acutely. In addition, while audit rates generally increase in income, there is a large spike of audits in the lowest income groups. In 2014, the lowest earners are audited at a higher rate than all other income groups, except for those earning nearly \$1M. The middle panel depicts the same data using income deciles. After 2010, low-to-middle income taxpayers (i.e. those in the 2nd-4th income deciles from \$6.7K to \$26K), were audited at a higher rate than all higher income deciles. This reflects the particular focus on pre-refund audits done principally by mail. 

Second, the rate at which taxpayers understate their tax liability increases monotonically with income and average adjustments are highest in the highest income decile.  The right panel presents audit outcomes estimated on the NRP data (described in Section~\ref{sec:data} below).  The blue line in this panel depicts the estimated fraction of audits in each decile with a true misreport of at least \$200, while the red line depicts the average adjustment by decile.  Because this is a stratified random sample with corresponding sampling weights, it is free of the selection bias inherent in measuring outcomes among risk-selected  audits, and can thus be used to construct consistent estimates of \emph{population} non-compliance. 

These facts raise the motivating questions of this work: if adjustments are highest in the highest income decile, and the misreport rate increases monotonically with income, then why are audits so highly concentrated on lower-to-middle income taxpayers? To what extent can such patterns be exacerbated or mitigated by machine learning techniques? And are there opportunities for improving vertical equity given this mismatch?  


\vspace{-4mm}
\section{Data and Key Terminology}
\label{sec:data}

We address these questions through a unique collaboration with the Treasury Department and IRS, which provides us access to two data sets previously unexplored in the computer science literature: (1) the NRP data, which consists of line-by-line audits of a stratified random sample of the US population (n=71.9K ) from 2010-14~\cite{NRP}; and (2) all Operational Audits (OP) for 2014 (n=791.9K), which are risk-selected audits to identify tax evasion. Each observation contains information filed in a tax return. All dollar amounts are adjusted for inflation to 2014 dollars. 

We train and evaluate our machine learning models on NRP data, as this data is a random, representative sample of the US population and does not suffer from selection bias~\cite{lakkaraju2017selective}.\footnote{That is, when a return is selected for OP audit, the IRS has reason to believe that the return represents a misreport. Hence, the return is likelier to have a large adjustment than a randomly selected return from the population, and may be more generally non-representative as well. That said, one limitation is that prior work has found that NRP data under-reports higher income tax evasion~\cite{guyton2021tax}.}
We note that the OP audit data includes observations that were selected for audit not solely through machine learning tools, but also through rule-based flags such as internal inconsistencies, and other methods of selecting audits. We use the OP data to display the status quo of audit selection in the IRS as of 2014, for example, in the left-most graphs in Figure~\ref{fig:motivation}.

In this data, three concepts are particularly important. First, by \emph{income}, we mean the taxpayer's reported \emph{total positive income} (TPI). TPI captures all positive income an individual receives, gross of any losses.\footnote{Not all this income is taxable---for instance, tax deductions for losses or charitable contributions may reduce the total amount of taxable income.} We focus on reported (rather than audit-adjusted) income because that is what the IRS observes at the time it selects taxpayers for audit, and we focus on TPI (rather than taxable income) because it represents a simple measure of earnings that is less likely to be affected by audit determinations.  Many of the analyses in this paper will be over binned income, i.e. discretized income into equal-sized buckets, typically taken to be deciles of the income distribution.{\hecom \footnote{While these bins and associated thresholds are relevant to our analysis and implemented algorithms, to our knowledge they are not currently used by IRS to categorize returns or to determine taxpayer eligibility for benefits.}}  

Second, we refer to the amount by which a taxpayer's return understates true tax liability as the \emph{misreport amount}. If a taxpayer overstates their tax liability, then their misreport amount is negative. Throughout, we use the terms ``adjustment'' and misreport amount interchangeably. 
For classification, we define a \emph{significant misreport} as whether the taxpayer's understated tax liability exceeds a de minimis amount (\$200). For brevity, we refer to these simply as  \textit{misreports}.  Our findings are consistent across different choices of threshold (see Appendix~\ref{app:robustness_thresholds}).

Third, we define the \textit{cost} of an audit to the IRS as the total cost of the auditor's time recorded on the particular audit, which we compute from auditor time\footnote{Notably, our available data for auditor time does not account for auditor time spent on audit appeals. } and wage data. In principle, audit costs also include other components, such as overhead or attorney's fees for litigated cases, but these are not possible for us to measure with our data.  Note that we are focusing only on the budgetary costs of audits to the IRS, not the broader societal costs imposed on taxpayers.


\vspace{-3mm}
\section{The Audit Problem\label{sec:fairness}}

To explore vertical fairness in audit allocations, we start with the tools most readily available to improve the fairness of algorithmic tools: the now-canonical fairness definitions applied in the literature~\cite{hardt2016equality,verma2018fairness}. In this section, we first formalize the audit selection problem. Second, we discuss vertical equity in the context of the audit allocation problem, and consider how common fairness definitions may improve vertical as well as horizontal equity in this context.
Third, we discuss implementation of these metrics and model evaluation. 

\vspace{-3mm}
\subsection{Formal Definitions and Preliminaries}
\label{sec:prelims}
In this paper we define the basic audit problem as the following:
given a budget and a set of taxpayers with associated features and audit costs, return a selection of taxpayers for audit that detects and recovers as much under-reported tax liability as possible within the given budget.\footnote{In practice, the audit problem undertaken by the IRS must balance a variety of objectives, including revenue maximization,  deterrence,  minimization of taxpayer burden, and reduction of improper payments.}  

For the majority of this paper, we model the budget $K$ as a fixed number of audited tax returns, which we represent as a percentage of the population. We use a budget of $0.644\%$, which is the average percentage of audit coverage between 2010-2014.
Taxpayers are indexed by $i\in 1,...,N$ and have features $X_i$. One of the features in $X$ is $\mathcal{I}_i$, the taxpayer's income. The \emph{income bucket} $b_i\in \mathcal{B}=1...10$ of the taxpayer is the decile of $\mathcal{I}_i$.  Taxpayers submit a report of tax liability $\tilde{\ell}_i$, which may be different than their true liability $\ell_i$. We let $\delta_i = \ell_i -\tilde{\ell}_i$ denote the taxpayer's adjustment or misreport amount. We will also use $m_i=\mathbf{1}[\delta_i>\tau]$ for an indicator variable being above the misreport threshold $\tau$. In our main experiments, we set $\tau=200$, and write $\pi_i:= \Pr[\delta_i\geq \tau|X_i]$. We denote the cost incurred to the IRS by auditing an individual $i$ as $c_i$. We use $a_i$ as an indicator for whether taxpayer $i$ is audited, and $\alpha_i$ for a probabilistic relaxation. Occasionally, we use $\hat{\cdot}$ to indicate prediction, e.g. $\hat{\delta}_i$ as predicted misreport amount. 

The machine learning models we use throughout this paper which we integrate into the audit selection process either predict \emph{probability} of misreporting $\hat{\pi}_i$ (for classification models), or \emph{expected amount} of misreporting $\hat{\delta}_i$ (for regression models). In order to create an audit allocation from these predictions, however, we must select only $0.644\%$ of the population, which is in practice much less than the percentage of individuals predicted to not comply. Thus in order to create an audit allocation from machine learning model predictions, we rank model outputs by magnitude of prediction and take the top $0.644\%$. The audit problem can be formalized as: $\max_a \sum_{i} \delta_i\cdot a_i \text{such that    } \frac{\sum_i a_i}{N}<K$. 

If we consider $K$ to denote a \emph{dollar} budget as opposed to an audit rate budget, as we do in Section~\ref{sec:cost}, the constraint will be changed to $\sum_i a_ic_i<K$. In practice, we use $\hat{\delta_i}$ or $\hat{\pi}_i$ to approximate $\delta_i$.\footnote{As stated, this is an integer program, but we solve the linear relaxation due to computing constraints and because observations represent many people.}
\vspace{-3mm}
\subsection{Algorithmic Fairness and Vertical Equity}
 
We now discuss vertical equity in the IRS audit allocation context and its connection to several common algorithmic fairness metrics from the literature. 

{\hecom\textbf{Vertical Equity.} 
Vertical equity requires that different individuals be treated \emph{appropriately differently}. In the taxation and audit context, we focus on vertical equity with respect to the appropriate treatment of taxpayers at different income levels. Appropriately different treatment depends on context-specific considerations and value judgments.
To illustrate, given the fact that audits are costly for taxpayers (in terms of money as well as time, effort, and mental stress), policymakers may wish to avoid models that concentrate audits on low-income taxpayers out of concern for distributional social goals and in recognition of the declining marginal utility of taxpayers’ income. Other potential baselines for setting policy in this space are aligning audit rates with true rates of non-compliance, or with an \emph{Oracle}-based selection, i.e. an allocation which selects individuals in order of true misreport amount. In our setting, because under-reporting rates increase with income (Figure~\ref{fig:motivation}) and an oracle
 places a higher probability of selection as income increases, these factors would suggest that audit rates should increase in income as well. Motivated by such considerations, 
 we explore formalizing the notion of vertical equity as \textit{monotonicity}  and evaluate the discrepancy between audit allocation and true rates of misreport as an important component of vertical equity. Our focus on monotonicity is intended to  illustrate how one might incorporate vertical equity concerns into algorithmic fairness, but we note that a fuller analysis from an optimal tax framework is beyond our scope here.\footnote{A full optimal policy analysis would have to consider such factors as heterogeneity in the audit burden or in the deterrence effect of audits by income. For example, audits of higher income taxpayers can be more involved, but audits of lower-income taxpayers may require obtaining harder to produce information and often involve freezing refunds for liquidity-constrained taxpayers while the audit proceeds. A fuller optimal policy analysis would also need to consider how audit policies interact with other tax variables (such as the income tax schedule and underpayment penalties) for achieving revenue and distributional goals. Each of these factors may impact vertical equity.} 

\textbf{Montonicity} Monotonicity (with respect to income) would require that the audit probability increase as income increases. Formally, given income buckets $b$ and $b'$, $b \geq b' \implies \Pr[a_i=1|b_i=b] \geq \Pr[a_i=1|b_i=b']$.
We consider directly constraining the audit allocation to be monotonic in Section~\ref{sec:mono}.

\textbf{Oracle Allocation}
 An \emph{oracle} is a theoretical omniscient model with access to the true amounts of misreporting in the data (i.e. the ground truth labels).  Formally, the oracle represents the model $\hat{\delta_i}=\delta_i$, where $\delta_i$ is the amount of true misreport of individual $i$. The oracle creates an audit allocation by selecting individuals for audit in order of their true amount of misreport amount until exhausting the allocation budget. 
Thus, the 
audit allocation selected by the oracle 
is naturally aligned with true incidence of misreport. 
Although we do not explicitly enforce this behavior, we evaluate the vertical equity of model allocations by the extent to which they match the audit rate by income of the oracle model.}

\textbf{Demographic Parity.} Demographic Parity (DP) requires, in our context, equal audit probability across income buckets. That is: $\Pr[a_i=1|b_i=b] = \Pr[a_i=1|b_i=b'], \forall \ b, \ b'.$
Note that with a fixed budget and groups of equal size, asking for DP amounts to requiring the same audit rate for each group, which weakly satisfies monotonicity.
Compared to the status quo described in Figure~\ref{fig:motivation}, this would result in lower audit rates for low-to-middle income taxpayers as well as very high income taxpayers, and higher audit rates for middle-to-upper income taxpayers. Important limitations to DP include that (1) as noted, equal audit rates do not imply equal audit burdens if taxpayers bear different costs, and (2) a perfectly accurate classifier would not satisfy DP unless the misreporting rates are exactly equal, which they are not.

 \textbf{Equal True Positive Rates ~\cite{hardt2016equality}.} Equal  True Positive Rates (TPR) requires that the audit probability of \emph{non-compliant} taxpayers not depend on income group, i.e.,  $\Pr[a_i=1|m_i=1,b_i=b]=\Pr[a_i=1|m_i=1,b_i=b'],  \forall \ b, \ b'$.
Equal TPR ensures that no group of non-compliant taxpayers can expect a higher or lower chance of audit based solely on their income, but this does not mean that compliant taxpayers of each income group face the same chance of an audit.

\textbf{Equalized Odds.} Equalized Odds (EO) asks that the audit probability of both compliant and non-compliant taxpayers should not depend on their income group, i.e.: $\Pr[a_i=1|m_i=0,b_i=b] =\Pr[a_i=1|m_i=0,b_i=b']$, and  $\Pr[a_i=1|m_i=1,b_i=b] =\Pr[a_i=1|m_i=1,b_i=b']$.
EO extends equal TPR fairness by requiring audits of  compliant taxpayers at the same rate across groups in addition to auditing non-compliant taxpayers at the same rate across groups. 

In Appendix A, we consider  conditions under which equal TPR or EO will result in monotonicity of the audit rate with respect to income. Specifically, we consider a hypothetical allocation that audits all taxpayers with $\hat{\pi}_i>0.5$, and show that under certain (differing) conditions, audit allocations that satisfy either either equal TPR or EO will result in monotonicity of the audit rate with respect to the misreport rate. Because the misreport rate increases with income (Figure~\ref{fig:motivation}), this suggests that enforcing one of the fairness constraints on a model generating audit allocations may also lead to monotonicity of audits with respect to income. We note that this result is suggestive, since models that satisfy a fairness constraint for the hypothetical allocation described above need not do so for the actual audit allocation induced after imposing a budget. Thus, we must ultimately test whether the targeted fairness constraints are satisfied on the audit allocation that results from a model once a budget is incorporated. Next, we describe algorithms to instantiate these conditions and evaluate the performance tradeoffs. We implement these algorithms and report results in Section \ref{sec:fairlearn}.

\vspace{-3mm}
{\hecom\subsection{Model Evaluation}}
In order to compare model allocations, we will consider several performance metrics. {\hecom First, in order to approximate how well an audit allocation matches the ground truth rate of misreport, we consider how closely audit rates correspond to selection based on an oracle. 
Specifically, we calculate the \emph{overlap} between a model's allocation and the oracle's, formally, the size of the intersection of the model and oracle's audit allocation over the total number of audits in an allocation: $\frac{\sum_i a_{i,O}a_{i,M}}{K \times N}$, where $a_{i,O}$ and $a_{i,M}$ represent audit indicators for the oracle and a model respectively, $K$ is the audit budget as a percentage of the population, and $N$ is the total number of taxpayers.\footnote{The total number of taxpayers, taking into account the sampling weights. This metric is equivalent to the top-k intersection of model outputs, where $k$ is the audit allocation budget. This metric is often used to compare model-generated explanations~\cite{ghorbani2019interpretation,dombrowski2019explanations,black2021selective}.} Note that the overlap will be between 0 and 1, with 1 representing an exact match of the oracle's allocation.  We consider models that more closely match the oracle allocation with respect to income to have preferable vertical equity performance in our context.}

Second, we consider \emph{revenue} collected, which is simply the sum of adjustments over all audits. Recovering revenue is one of the key goals of the IRS and is itself relevant for distributive policy, since it funds services provided to citizens.  We define revenue as follows: $\sum_i \a_i \delta_i$.\footnote{We take sampling weights into account in this calculation, so in practice we calculate revenue as $\sum_{i \in |D|} \a_i \w_i \delta_i$, where $|D|$ is the size of the NRP data set, and $\w_i$ is the sample weight assigned to each row.} 

Third, we consider the \emph{no-change rate}, which is the fraction of audits resulting in no (substantial) adjustment. No-change audits are undesirable from both IRS and taxpayer perspective, as both the auditor and taxpayer could have saved significant time, effort, and stress. We define the no-change rate as $\frac{\sum_i \a_i \cdot (1-m_i)}{\sum_i \a_i}$. 

Fourth, we consider the \emph{cost} of the audit to the IRS, which is important both in terms of the feasibility of an audit policy and its net revenue implications. We define cost as  $\sum_i \a_i c_i$, where $c_i$ is our estimate of cost per return.\footnote{Similarly to revenue, in practice, we calculate cost as $\sum_{i \in |D|} \a_i \w_i c_i$.} 
We describe how we obtain cost estimates in Section \ref{sec:cost}. In Sections \ref{sec:moreflexible}-\ref{sec:changing}, we hold audit rates fixed and measure incurred cost. In Section \ref{sec:cost}, however, we consider constraints on the total dollar cost of policies, and show how they may help explain the existing discrepancy between income and the audit rate.


\subsection{Model Implementation}
There exists a large body of research surrounding how to best implement and guarantee the common fairness metrics outlined above~\cite{agarwal2018reductions, kearns2018preventing, donini2018empirical,celis2019classification, zafar2019fairness, hardt2016equality}. From this rich literature, we choose to rely on a technique developed by Agarwal et al.~\cite{agarwal2018reductions}, which intervenes in a model's training process to add a constraint during optimization which incentivizes the model to satisfy a given constraint in its predictions~\cite{agarwal2018reductions, donini2018empirical}. 
Methods that enforce fairness constraints during training time are often described as ``in-processing," as opposed to those which intervene at prediction time, which are called ``post-processing."
Agarwal et al.'s (in-processing) technique allows for demographic parity, true positive rate parity, equalized odds, and other constraints to be satisfied in expectation in a model's predictions on the training distribution. We include results from other methods of enforcing fairness constraints, including post-processing techniques, as a discussion of the differences between various methods in Appendix \ref{app:more_fairness}.


\vspace{-3mm}
\section{Flexible classifiers and audit classification \label{sec:moreflexible}}

\begin{table}[t]
\centering
\small
\begin{tabular}{llc|ddddd}
\MC{1}{c}{Model Type} & \MC{1}{c}{Label} & \MC{1}{c|}{Fairness} & \MC{1}{c}{Revenue}  & \MC{1}{c}{No-Change} & \MC{1}{c}{Cost} & \MC{1}{c}{Net Revenue} & \MC{1}{c}{Oracle}\\
 & \MC{1}{c}{Type} & \MC{1}{c|}{Constraint} &  \MC{1}{c}{(\$B)}  & \MC{1}{c}{Rate} &  \MC{1}{c}{(\$B)} & \MC{1}{c}{(\$B)} & \MC{1}{c}{Overlap} \\
\hline
Oracle & - & \texttimes & 29.40 & 0.0\%  &  0.33 & 29.07 & 1.00\\
LDA  & Class & \texttimes & 6.07 & 12.8\% & 0.21 & 5.86 & 0.09\\
Random Forest & Class & \texttimes &  3.05 & 3.5\% & 0.08 & 2.97 & 0.00\\
Grad Boosted & Class & \texttimes & 4.05 & 4.2\% & 0.08 & 3.97 & 0.00\\
Random Forest & Class & \checkmark (DP) & 2.75 & 8.0\% & 0.07  & 2.67 &  0.08\\
Random Forest & Class & \checkmark (TPR) & 0.69  & 12.4\% & 0.15 & 0.54 & 0.04\\
Random Forest & Class & \checkmark (EO) & 0.53  & 13.6\% &0.15 & 0.38 & 0.04\\
Random Forest & Class & \checkmark (Mono) & 3.00 & 4.0\% & 0.10 & 2.90  &  0.01 \\
Random Forest & Reg & \texttimes  & 10.22  & 23.3\% & 0.50 & 9.72 & 0.23\\
Grad Boost & Reg & \texttimes & 10.20  & 20.0\%  & 0.50 & 9.70 & 0.22\\
%
%
\end{tabular}
\caption{Revenue, no-change rate, cost,  net revenue, and oracle overlap for all models considered in this paper. No-change rate represents the percentage of audits that were allocated to compliant taxpayers; cost reflects cost to the IRS as described in Section~\ref{sec:cost}. These results reflect audit allocations that select the top $0.644$\% of taxpayers predicted most likely to misreport from each model. All metrics are reported on the test set, using the representative NRP sampling weights to scale up to the US taxpayer population. \label{table:all_nums}}
\end{table}

We begin by examining the hypothesis that the disproportionately high audit rate observed for low income earners may stem from using simpler classification models in guiding audit allocations.   
We demonstrate that (i) the disparity displayed in audit rates does not appear to arise from the less complex models similar to those the IRS has historically used; and, (ii) applying more complex models---in this case, Random Forests and Gradient Boosting--- actually \textit{exacerbates} the burden on lower income taxpayers. 

\vspace{-3mm}
\subsection{Experimental Setup} 
In this section, we consider the audit allocation determined by Linear Discriminant Analysis (LDA) (an approximation of the historical choice by the IRS), a Random Forest Classifier, and a Gradient Boosting Classifier. In principle, classifiers may perform well at reducing the no-change rate, furthering IRS's objective to avoid burdening compliant taxpayers. To be clear, the audit allocation 
is not simply the model's predictions, but rather the individuals most highly predicted for misreport up to the audit budget, as described in Section~\ref{sec:prelims}. We use NRP data from 2010-2014 to train all models in this paper to predict the likelihood of misreporting. We randomly split this data into a train and validation (75\%) and test (25\%) sets. We search for optimal hyperparameters using \textit{sklearn}'s GridSearchCV method with 5-fold cross validation.\footnote{As described in detail in Appendix~\ref{app:exp_details}, we train all but LDA models with \emph{sampling weights} provided in the NRP data, meant to ensure the data is representative of the taxpayer population. 
For LDA models, we sub-sample a dataset from the NRP data that respects the sample weights by randomly selecting (with replacement) rows from the weighted training data according to the weights. For example, suppose that each row $x$ has a sample weight $w$, and the sum of all weights in the training set is $W$. Then each observation has a $\frac{w}{W}$ chance of getting selected as any given row in the sub-sampled data.}

All results in this and following sections are calculated on the test set, which is  
reserved for reporting results. 
Results are reported by rescaling costs and revenues to reflect estimated average annual values for the full population (averaged between 2010-2014). For each classification model, we sort taxpayers in descending order of predicted \emph{misreport probability} to produce a ranking. 
We then apply an audit rate budget of $0.644$\% of the population, reflecting the average audit rate from 2010-2014,  and select audits $a_i$ by taking the top $0.644$\% of the population (i.e. 1125000 audits)in rank order. 
Further 
details are in  Appendix~\ref{app:exp_details}.

\begin{figure}[t]
    \centering
    \includegraphics[scale=0.225]{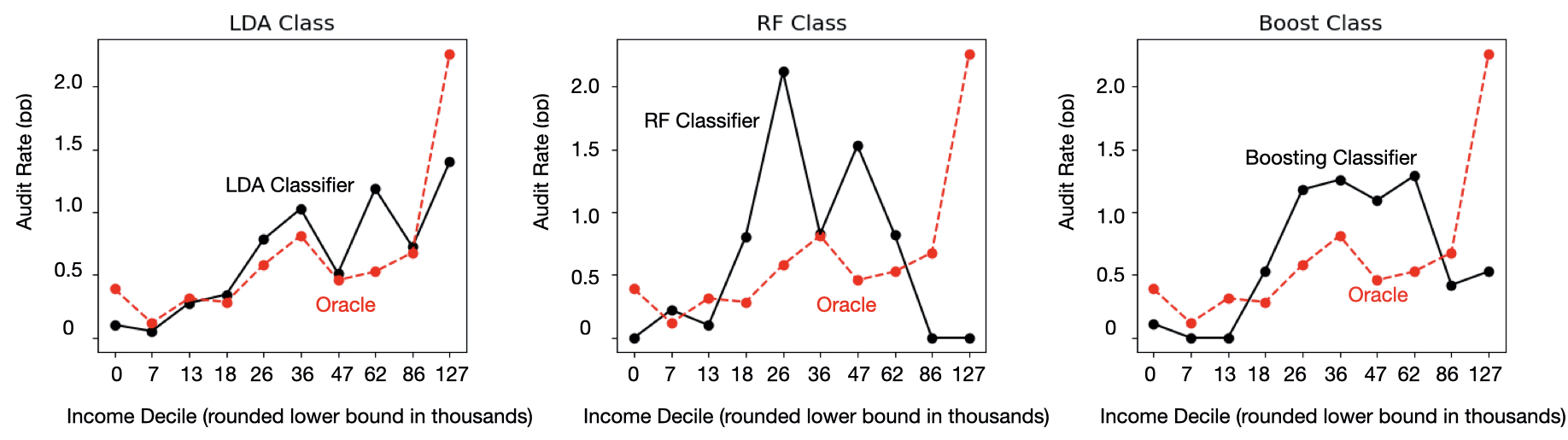}
    \vspace{-0.15in}
    \caption{From Left to right: Audit Rate by Income LDA Classifier, Random Forest Classifier, and Gradient Boosted Classifier, presented in black. The oracle allocation on the same budget is presented in red on the same graph.}
    \label{fig:class_models}
    \vspace{-0.2in}
\end{figure}

\vspace{-3mm}
\subsection{Results} 
Figure~\ref{fig:class_models} displays the audit rate by income of allocations obtained via 
ranking the predictions of LDA, Random Forest Classification, and Gradient Boosted models by predicted probability of misreport and selecting the top $0.644$\% of the population. Revenue and no-change rate of these models are included in Table~\ref{table:all_nums}. We highlight implications below. 

First, higher model flexibility can lead to high audit focus on lower and middle income populations. As Table~\ref{table:all_nums} shows, the Random Forest Classifier is well-optimized for the classification task: it has an extremely low no-change rate---just 3.5\%---whereas simpler models have no-change rates higher than 12.8\%. However, the Random Forest Classifier focuses almost exclusively on the lower-middle and middle of the income spectrum, not targeting the highest earning 20\% at all. Similarly, the Gradient Boosted classification model concentrates most of the audit selection to the middle of the income spectrum (4-8th decile), with a strong drop-off for the top 20\% of the population. 
(Appendix~\ref{app:log_reg} shows that another simpler model (logistic regression) also results in rough monotonicity.)

Second, the simpler LDA model more closely matches the oracle. The LDA classifier has an audit selection curve that is roughly monotonic in income, with large increases in audit rate in the high income region. As 
LDA has been the IRS' historical modeling approach (although it differs in practice with our implementation), this suggests that the large spike in operational audit selection rate on the lower end of the income spectrum apparent in 2014 may not stem directly from the predictions algorithmic components of the decision system, but rather other policy and modeling choices.  

Third, increased classification accuracy does \textit{not} imply increased revenue. 
Table~\ref{table:all_nums} shows that the Random Forest and Gradient Boosted models have significantly lower no-change rates than the LDA model (3.5\% and 4.2\% vs. 12.8\%), yet also substantially \emph{lower} revenue ($\approx$\$3B and \$4B vs.\ $\approx$6B). This highlights that improved performance on one objective (e.g., accuracy) may come at the expense of other seemingly intertwined objectives (e.g., revenue).


\vspace{-3mm}
\section{Fairness Constrained Classification\label{sec:fairlearn}}
We now explore the use of bias mitigation methods to promote vertical equity.

\vspace{-3mm}
\subsection{Experimental Details}
We enforce algorithmic fairness definitions on the Random Forest model at different points in the audit selection process: \emph{during} training, or in-processing, following Agarwal et al. ~\citep{agarwal2018reductions}, and \emph{after} training but before prediction, or post-processing (deferred to Appendix \ref{app:more_fairness}, following Hardt et al.~\cite{hardt2016equality}). 
Our setup for training the fairness-constrained models mirrors our setup for the fairness-unconstrained models, with the exception that
we do not train the models with sampling weights, but rather subsample a dataset from the NRP weighted data as we do for LDA models as described in Section~\ref{sec:moreflexible}. 
This is 
because the in-processing methods are implemented using the FairLearn package~\cite{bird2020fairlearn}, and the FairLearn package leverages \textit{sklearn}'s sampling weight functionality in the course of their algorithm.

\vspace{-3mm}
\subsection{Results}
\begin{figure}
    \centering
    \includegraphics[scale=0.225]{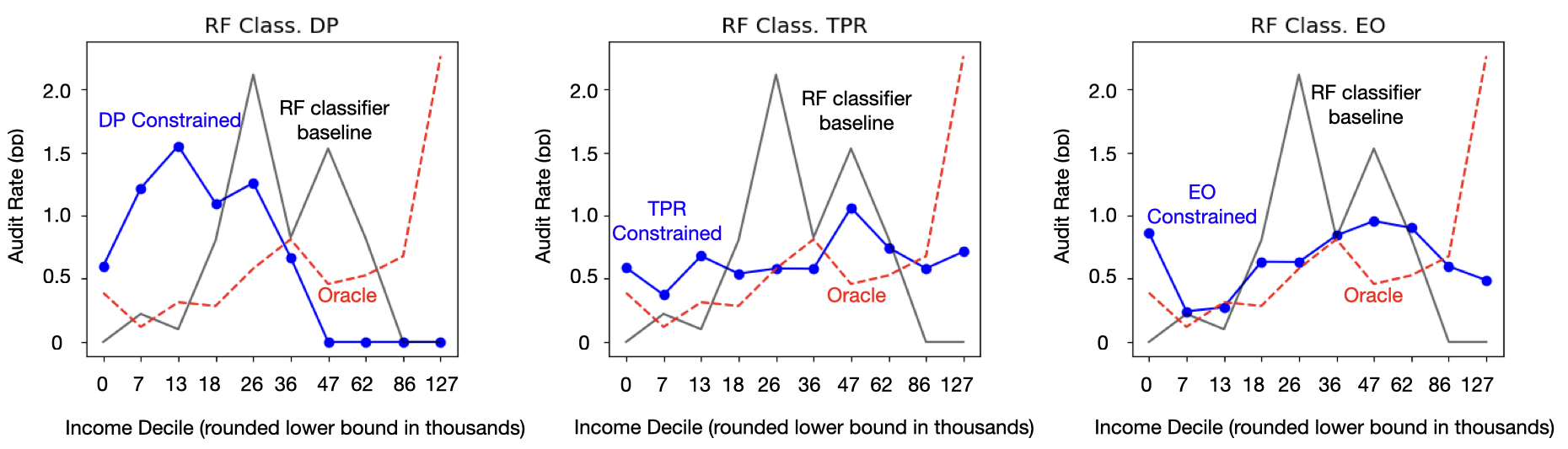}
    \caption{In-process fairness techniques imposed on a Random Forest model. From left to right: enforcing Demographic Parity (DP),   Equal True Positive Rates (TPR), and Equalized Odds (EO). Black (blue) series represent the unconstrained (constrained) allocation.}
    \label{fig:inprocess}
\end{figure}
Our high-level result is that enforcing fairness constraints during training results in steep trade-offs with 
limited fairness payoffs for the budgeted allocation problem. 
Figure~\ref{fig:inprocess} displays audit rate by income decile for Random Forest Classifier trained to respect each of the fairness definitions considered. We present revenue and no-change rate in Table~\ref{table:all_nums}. 

Equal TPR and EO models do lead to overall lower focus on low and middle income groups. However, they continue to under-target the highest end of the income spectrum when compared with the oracle predictor.  And perhaps surprisingly, despite this shift to focus slightly more on higher ends of the income spectrum, enforcing these constraints actually leads to a large decrease in revenue: from over \$3B to as low as \$600M in revenue. We additionally notice a decrease in the no-change rate towards levels closer to the baseline LDA predictor.  Finally, they imperfectly enforce the targeted fairness constraints once the audit budget is imposed: this is immediately evident in the allocation from a model constrained to respect demographic parity, as the audit rate is not equal across groups. 

Given these results, we argue that enforcing fairness constraints during training is not an effective technique to improve vertical equity in an audit allocation setting. We highlight some broader implications of vertical equity for algorithmic fairness in Section~\ref{sec:discussion}.


\vspace{-4mm}
 \section{Enforcing Monotonicity \label{sec:mono}}
    

In this section, we instead enforce monotonicity directly. We do this by solving the following linear program:
\small
\begin{align*}
    \max_{\alpha} \sum_{b \in \mathcal{B}} \sum_{i \in b} \alpha_i \hat{\pi}_i w_i \qquad  \text{ s.t. } &\alpha_i \in [0,1] \forall i;
    &\sum_{b \in \mathcal{B}} \sum_{i \in b} w_i \alpha_i = 1 ;
    &\sum_{i \in b_1} \alpha_i w_i \leq \sum_{i \in b_2} \alpha_i w_i &
    &\dotsb 
    &\sum_{i \in b_9} \alpha_i w_i \leq \sum_{i \in b_{10}} \alpha_i w_i
\end{align*}
\normalsize
where all notation follows Section \ref{sec:prelims}, $w_i$ represents sampling weights, and the Random Forest Classifier generates $\hat{\pi}_i$. 

The leftmost panel of Figure \ref{fig:reg_models} shows the audit distribution of the solution to the linear program. Notably, all income buckets from the fourth decile and above are audited at the same rate.  In other words, the constrained solution audits higher income deciles at the minimum in order to focus most energy on the fourth decile. The trade-off with performance is relatively modest relative to the unconstrained classifier, as seen in Table \ref{table:all_nums}: revenue does decrease, but by only \$50 million; the no-change rate increases by half a percentage point. These results indicate that, especially compared to enforcing traditional fairness constraints, enforcing monotonicity may be a relatively economical approach to encourage (one notion of) vertical equity. The next section shows, however, that this approach may be far from optimal. 

\vspace{-3mm}
\section{From Classification to Regression \label{sec:changing}}

We now demonstrate that changing the model's prediction target from the \emph{probability} of misreport to \emph{expected misreport amount}---i.e. changing from a classification to regression algorithm--- can reduce burden on lower-income taxpayers and make audit rates more closely mirror the oracle while also \emph{increasing} revenue. 
This demonstrates that, in some circumstances, changing the model's prediction task to reflect behavioral desiderata--rather than enforcing a constraint on top of a model optimizing for an imperfectly aligned task---is a more effective technique to reach equity goals.

\begin{figure}
    \centering
\includegraphics[width=\textwidth]{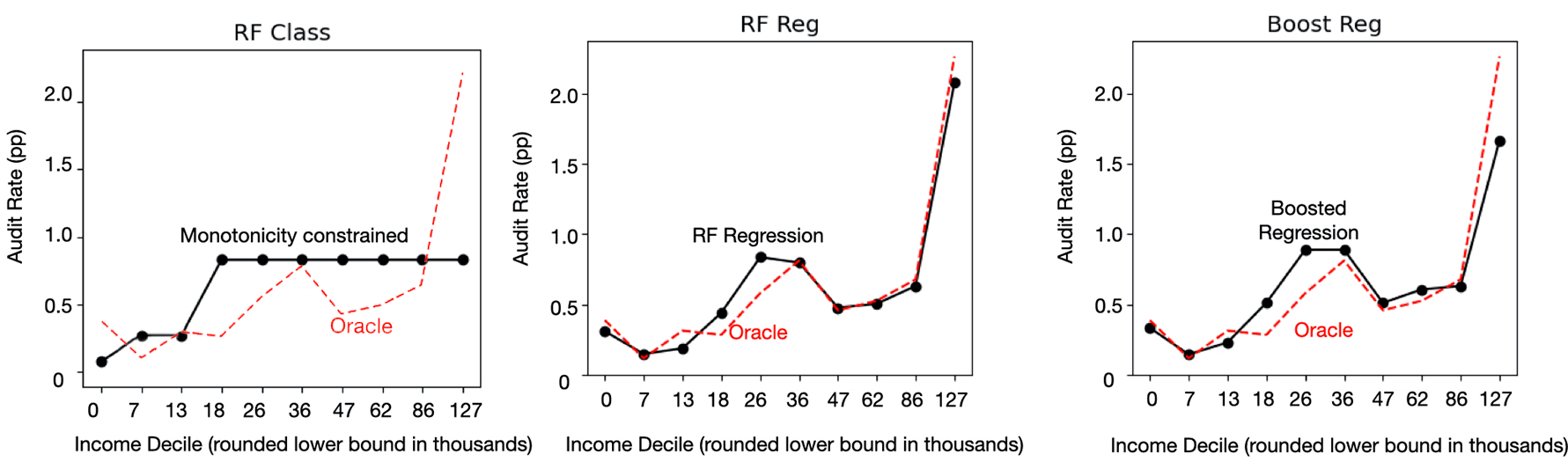}
    \vspace{-0.2in}
    \caption{Left: Monotonicity constraints explicitly enforced on audit allocations of a Random Forest Classifier. The black line represents the allocation, the red line represents the oracle. 
    Right: Audit Rate by Income in  Random Forest Regressor, and Gradient Boosted Regressor, presented in black. The oracle allocation on the same budget is presented in red on the same graph.}
    \label{fig:reg_models}
\end{figure}
We train regression models with the same process described in Section~\ref{sec:moreflexible} for classification models, 
but use the misreport amount as the label rather than to a binary indicator of misreport.
The audit rate by income decile of Random Forest and Gradient Boosting regression models are displayed in black in Figure \ref{fig:reg_models}, along with the oracle in dashed red. 

We highlight two chief results. First, shifting the prediction target from the probability of misreport (classification) to the expected amount of misreport (regression) 
{\hecom shifts audit focus from lower income to higher income taxpayers, resulting an audit allocation that is not only nearly monotonic, but also closely matches the oracle allocation.  As can be seen in Figure \ref{fig:reg_models} and the right column of Table~\ref{table:all_nums}, the resulting allocation is in fact closer to the oracle than any other prior allocation.}
Thus, changing from a classification to a regression task can be seen as one method to directly optimize for {\hecom (multiple notions of) vertical equity} 
in the IRS context. 

{\hecom Second, while changing the prediction target from presence of significant misreport to amount of misreport does increase the no-change rate (up to 20-23\%), it also results in a dramatic increase in revenue. Table~\ref{table:all_nums} shows that assessed revenue under regression rises to \$10B, compared to the \$3.6B baseline of high-powered classification models. } 

{\hecom Thus,} within the set of higher complexity models, switching from classification to regression may provide an effective way to decrease the mismatch between audit allocations and ground truth levels of misreport, as well as decrease audit focus on lower and middle income individuals, while \emph{increasing} under-reported tax liability detected by the IRS. We leave the discussion of how regression-based allocations interact with the IRS goal of broad-spectrum noncompliance deterrence---which may necessitate additional focus on lower-magnitude noncompliance---to future work. 


\vspace{-3mm}
\section{Agency Resources and The Impact of a Narrow Return-on-Investment Approach\label{sec:cost}}
\begin{figure}
    \centering
        \includegraphics[scale=0.25]{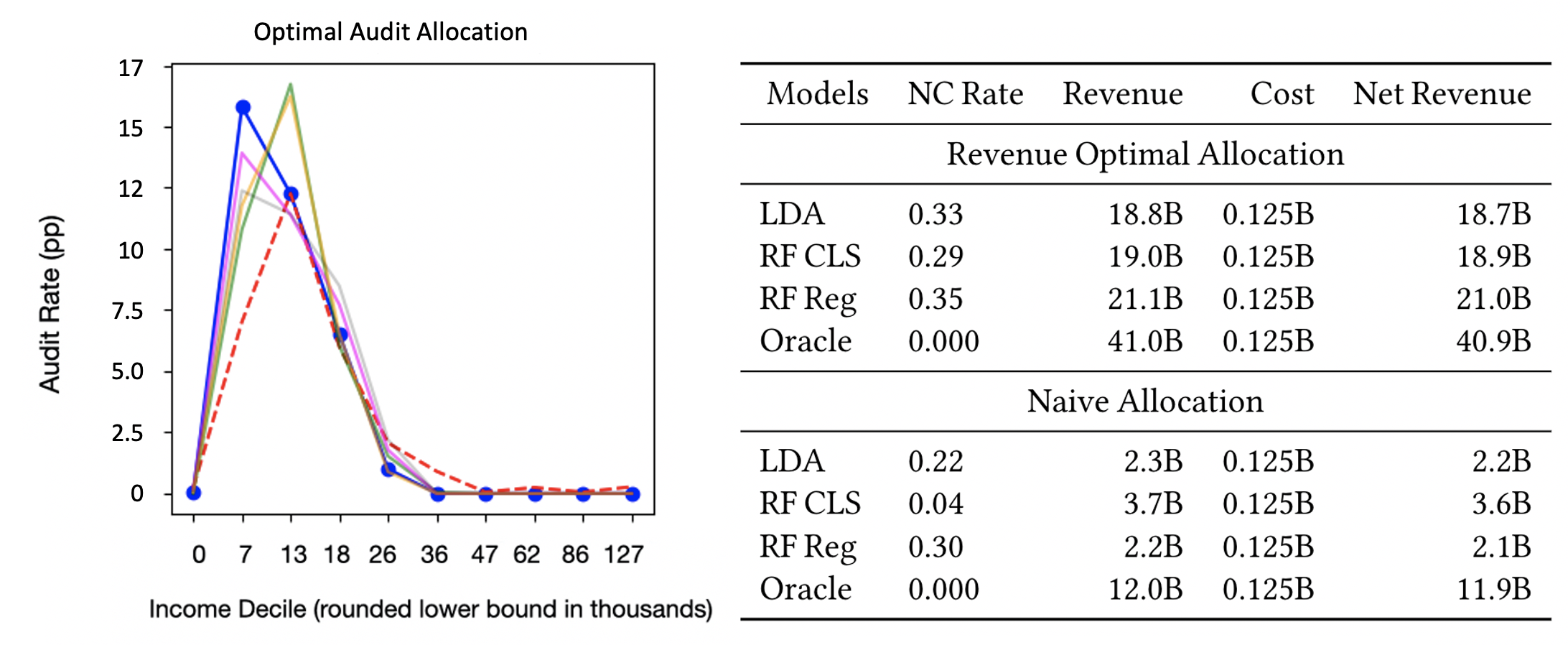}

    \vspace{-0.15in}
    \caption{Left: Revenue-optimal allocation from all models considered in paper so far, considering budget as a dollar amount. The x-axis represents income deciles, and the y-axis represents audit rate. We consider the budget to be 125 million, or the average budget over 2010-2014 using our approximation of cost described in Section~\ref{sec:cost}. The revenue-optimal allocation requires that the individuals with the highest \emph{ratio of revenue returned to IRS over cost to the IRS} are selected for audit up to the dollar budget, which results in a similar allocation from all models. 
    Right: No-change rate, revenue, cost, and net revenue of allocations from different models considered in the paper when modeling audit budget as a dollar amount, for both for net-revenue optimal and naive allocations.  \label{fig:cost_constrained}}
   \vspace{-0.2in}
\end{figure} 

{\hecom We now turn to examining the relationship between vertical equity and agency resources. As noted,} how an audit proceeds depends upon the type of noncompliance suspected: for example, many audits on lower-to-middle income individuals concern a potentially incorrectly claimed tax \emph{credit}, whereas audits on higher income individuals 
more often involve insufficient taxes being paid on income or other assets~\cite{budget_dropoff}. Audits concerning tax credits are largely done via correspondence, where the IRS sends a letter to the taxpayer requesting verification of qualification for the claimed credit~\cite{budget_dropoff}. Other types of misreporting often incur in-person IRS audits
~\cite{budget_dropoff}. Correspondence audits are extremely resource-efficient for the IRS. 
On the other hand, in-person audits require more time and expertise, and tend to incur much higher costs. 
Further, a non-response from a correspondence audit is taken as an admission of non-compliance, resulting in revenue returned to the IRS~\cite{guyton2018effects}, and keeping investigation costs low. One study on EITC correspondence audits found that 
up to 75\% were determined to be noncompliant due to nonresponse, undeliverable mail, or insufficient response~\cite{guyton2018effects}. Thus, the ease of correspondence audits, coupled with the high nonresponse rate leading to frequent revenue returned to the IRS, may result in more reliably recovered income than in-person audits, in addition to their lower direct  costs. 
Here, we use a simple model to explore whether a constrained monetary budget, coupled with differential cost of audits across the income spectrum, might affect audit allocation. 
We model the audit budget in terms of a \emph{dollar} cost\footnote{We note that a fixed monetary budget may not perfectly capture the resource constraints faced in practice; for instance, the limited number of auditors of a given expertise level may bind more tightly than any short-term dollar budgets. Still, this simplification captures important heterogeneity in the degree to which audits push against agency resource constraints. In addition to shedding light on the status quo audit distribution, such analysis may be interesting to the field of applied ML, as relatively few papers consider budget-constrained allocation models.} as opposed to a constraint on the
fraction of the population 
audited. 

\vspace{-3mm}
\subsection{Experimental Details}
In our consideration of the effects of agency resource limitations on audit allocation, we focus on the dollar cost of audits to the IRS and its budgetary constraints. We calculate a simplified version of cost that only takes into account the cost of the actual tax examination, based on data from previous real operational audits. We calculate cost as the product of the examiner's time spent on a given audit with their hourly pay. 
We average this product over income deciles and \emph{activity code}, which roughly corresponds to groupings of individuals based upon what tax forms they have filled out, to estimate audit cost.
We incorporate cost into our analysis by directly including the dollar budget as an audit selection constraint, thus creating a linear program to maximize total predictive value (i.e. probability or amount of misreport) with respect to the dollar budget. As we show in Appendix~\ref{app:cost_eqs}, this formulation is equivalent to a fractional knapsack problem; thus, the optimal solution is to select individuals in order of their ratio of cost to return to the IRS, in other words, return-on-investment. 
We use a dollar budget of \$125M, the average estimated total cost of audits from years 2010-2014. 
Further details are 
in Appendix~\ref{app:cost_cac}.

\subsection{Results} 
We present three main results. First, due to the differing \emph{audit costs} to the IRS by income, {\hecom return-on-investment focused} audit
selection results in an allocation which overwhelmingly targets lower income taxpayers. In the left panel of Figure~\ref{fig:cost_constrained}, we show the optimal audit selection policy under a dollar budget with rankings from each of the models considered in our paper thus far. As described in Appendix~\ref{app:cost_eqs}, the revenue-optimal audit allocation is to choose returns with the return on investment, i.e. the best ratio of predicted reward (adjustment in regression or change probability in classification) to audit cost.   Based on our calculations of audit cost, audits in the highest income decile may cost up to 41 times the least costly audits. Given the disparities in audit costs over the income spectrum, the revenue-optimal audit selection method results in an allocation that almost exclusively targets lower income individuals. 

Second, the return on investment of auditing lower income individuals may shed light on the status quo allocation's focus on low and middle income individuals. We note that the optimal allocation with a dollar budget looks similar to the 2014 operational audit selection policy (Figure~\ref{fig:motivation}).
Given the decreasing IRS budget over time, prioritization of net revenue maximization may  influence the vertical equity of status-quo 
allocations. However, we note that the extremely low cost of audits on the lower end of the income spectrum result at least partially from a policy choice made to proceed with different types of audits in 
asymmetric ways: i.e., via \emph{correspondence audits} on the lower end of the spectrum, and in-person audits on the higher end. This decision, coupled with the choice to view a lack of response as noncompliance,  results in less time, and 
fewer resources, spent on audits for individuals in the lower end of the income spectrum, thus resulting in the constrained revenue-optimal allocation focusing so highly on low-income individuals.  

Third, we find that to improve vertical equity and increase revenue collected, regression models require a higher dollar budget.  
As demonstrated in Section~\ref{sec:changing} and Table~\ref{table:all_nums}, regression models produce the highest net revenue allocations amongst models 
constrained to only audit a given percentage of the population ($0.644\%$). 
However, 
the cost to the IRS of these allocations are considerably higher than classification methods---and indeed, higher than our approximation of average IRS budget between 2010-2014, \$125M.  
At this low dollar budget, regression models under-perform on revenue compared to classification models, demonstrated in the right panel of Figure~\ref{fig:cost_constrained}: this is because regression models target individuals in the higher income realm, where the audit cost is greater, thus preventing such allocations from targeting enough individuals to generate high revenue returns. 
This suggests that increasing the dollar budget available for audits may 
present an opportunity for not only more net revenue, but also in a more equitable allocation of audits. 


\vspace{-3mm}
\section{Discussion\label{sec:discussion}}

Through this unique collaboration with the Treasury Department and IRS, we have studied the impact of machine learning on vertical equity. Our work suggests that: (1) more accurate \emph{classifiers} may exacerbate rather than improve income fairness concerns
; (2) off-the-shelf fairness solutions are not well-suited for attaining income fairness; (3) fundamental modeling changes, like switching from a binary target to a regression target, can improve income fairness; and (4) external constraints, like institutional budgets, may influence fairness regardless of what underlying predictive model is used. {\hecom Specifically, a return-on-investment focused audit allocation may undermine vertical equity under current conditions}. More broadly, this work underscores the importance of vertical equity, in addition to horizontal equity, in real-world application areas of machine learning. 
To our knowledge, the term does not appear in the algorithmic fairness literature,\footnote{Outside the fairness community, but inside the general umbrella of technology and engineering, the term \emph{has} been used; in particular, \cite{yan2020fairness} use both terms in a study of equity in access to transportation, and point towards a possible link to algorithmic fairness. However, their interpretation of vertical and horizontal equity are substantially different from ours; for instance, they suggest that group fairness should be linked to \emph{vertical} equity.} and traditional fairness metrics can be seen as focusing on horizontal, rather than vertical, equity. 
Given the importance of achieving vertical equity for policy, this work points towards further development of algorithmic fairness techniques as a promising path for future research.

Our results also reveal a subtle dimension of fairness when resources are allocated under a budget constraint.  When there is greater uncertainty for high-income individuals, classification risk scores can shift audit allocations to lower-income individuals simply because misreports are easier to predict. Exploring the role of heterogeneity in uncertainty and its fairness implications might explain a wide range of other policies that have disparate impact (e.g., enforcement against blue collar vs. white collar crime).  In the tax context, this insight also underscores the need for information collection mechanisms (e.g., third party reporting by offshore financial institutions) to reduce such uncertainty in the high income space, which has been the subject of significant policy debate~\cite{,data_collection_taxes,edelhertz1970nature}.

We conclude by noting several limitations and opportunities for further work. First, we do not have access to the exact models employed by the IRS or the complete procedures, so we cannot make definitive inferences about past or current practice. Second, we only observe (an imperfect proxy of) the IRS cost of an audit, not taxpayer costs; the true societal cost of an audit may thus be materially different than what is used in Section \ref{sec:cost}. Third, our approach has not distinguished between underreporting from misreported income versus over-claimed refundable credits; some policymakers may view these forms of noncompliance differently. Finally, while the notion of monotonicity is motivated in part by the near-monotonicity of adjustments and the oracle results, it is not grounded in a full welfare analysis. Such an approach might take into account audit costs to taxpayers, deterrence effects, and other policy levers, such as tax rates or penalty amounts. 
Accounting for these dimensions may not necessarily yield strict monotonicity as a form of vertical equity, and we view this theoretical development as an important path to refining vertical fairness. 

Despite these limitations, this work represents an important step given the policy significance and complexity of this setting. The scale of the problem is substantial --- amongst U.S. taxpayers alone, improvements in this area can affect more than 100M individuals annually. Moreover, ``government by algorithm'' continues to grow \cite{engstrom2020government}, and understanding how to incorporate fundamental fairness and redistribution concerns in taxation may serve as a model for other governance-related settings. Finally, insights derived in this setting --- such as the differing effects of costs when considered as a constraint rather than in the objective --- may carry over to other unrelated settings. Our finding that a narrow  return-on-investment approach may degrade rather than improve vertical equity may be critical in a range of policy contexts~\cite{parrillo2013against}. Thus, both the technical concepts and policy problem are important and vital avenues for future research. 

\bibliographystyle{ACM-Reference-Format}
\bibliography{bib}


\clearpage
\appendix
\section*{Appendix}

    
\section{Fairness constraints and Monotonicity \label{app:intuition}}
\label{app:mono_proofs}
In this section, we show that a selection process which achieves either equal true positive rates or equalized odds will, under certain (differing) conditions, satisfy monotonicity with respect to the ranking of bins by true misreport rate. That is, such models must choose a higher audit rate in a group with a higher rate of misreport than it chooses in a group with a lower rate of misreport. Given that, in our setting, misreport rate appears to be monotonic with respect to income, such results would imply audit rate monotonicity with respect to income as well. 


For this section, we assume the following setup. There are two groups of observations $G_1$ and $G_2$ of equal size $n$, and they have $m_1$ and $m_2$ positive labels respectively and $r_1=n-m_1$ and $r_2=n-m_2$ negative labels. An auditor selects $A_1$ observations for audit from $G_1$ and $A_2$ from $G_2$ such that the total audits $A_1+A_2$ is their audit budget $A$. The auditor has access to a model $\mathcal{M}$ which gives binary predictions $\hat{y} \in \{0,1\}$. The auditor would like to select $A_1$ and $A_2$ in such a way that she maximizes true positives selected; we assume that $A<< \sum_{j\in\{1,2\}} \sum_{i \in G_j} \mathcal{M}(X_i)$ - that is, the audit budget is much smaller than the total amount of positive predictions by the model.

\emph{After} the auditor makes selections $A_1$ and $A_2$, we define the $\alpha_1$ as the false positive rate of the audits for $G_1$; that is, 
\begin{align*}
    \alpha_1 = \text{FPR}_1 = \frac{\text{False Positives in $G_1$ selected}}{r_1}. 
\end{align*}
In other words, $\alpha_1$ is the false positive rate of the \emph{composition} of whatever the auditor's selection process is with the predictions of the model (not the false positive rate of the model itself). We define $\alpha_2$ similarly. Additionally, we define $\beta_1$ as the true positive rate of the audits for $G_1$, i.e.:
\begin{align*}
    \beta_1 = \text{TPR}_1 = \frac{\text{True Positives in $G_1$ selected}}{m_1}
\end{align*} and $\beta_2$ similarly. Finally, let $p_i = \frac{\text{True Positive Predictions for group $i$}}{A_i}$, often known as precision.

\subsection{Equal TPR and Monotonicity}
Our first lemma relates monotonicity to precision in the case of a selection process satisfying equal true positive rates:
\begin{lemma}
Suppose that the selection process satisfies equal true positive rates. Then with $A_i,\  m_i,$ and $p_i$ defined as above: $A_2 \geq A_1 \iff \frac{m_1}{m_2} \leq \frac{p_1}{p_2}$.
\end{lemma}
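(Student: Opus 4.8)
The plan is to express the number of selected true positives in each group in two different ways and equate them. First I would let $TP_i$ denote the number of genuinely noncompliant (positive-label) taxpayers that the selection process audits in group $G_i$. By the definition of the true positive rate we have $TP_i = \beta_i m_i$, and by the definition of precision we have $TP_i = p_i A_i$; these are just two factorizations of the same count, one in terms of how many positives were caught relative to all positives, the other in terms of how many audits were ``correct.'' This double-counting identity is the entire engine of the argument.

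Second, I would invoke the hypothesis that the selection process satisfies equal true positive rates, setting $\beta_1 = \beta_2 =: \beta$. Equating the two expressions for $TP_i$ then yields $p_i A_i = \beta m_i$, so that $A_i = \beta m_i / p_i$ for each $i$. Here I am assuming the budgeted audits recover at least one true positive, so that $\beta > 0$ and each $p_i > 0$; under the running assumption that the budget $A$ is small relative to the mass of positive predictions, this is the generic situation and is precisely what makes the precisions $p_i$ well-defined.

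Third, I would simply compare the two audit counts. Since $\beta > 0$ is a common positive factor, $A_2 \geq A_1$ holds if and only if $\frac{m_2}{p_2} \geq \frac{m_1}{p_1}$. Cross-multiplying by the positive quantities $p_1$ and $p_2$ rewrites this as $m_2 p_1 \geq m_1 p_2$, which is exactly $\frac{m_1}{m_2} \leq \frac{p_1}{p_2}$, establishing the claimed equivalence in both directions at once.

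I do not expect a genuine obstacle here: the result is an algebraic identity flowing from the two definitions, not a structural argument. The only point requiring care—rather than real difficulty—is the degenerate boundary case $\beta = 0$ (no true positives selected in either group), where the precisions $p_i$ and hence the stated ratio are undefined; I would handle this by noting that the lemma is stated for the regime where the audit selection is nontrivial, so that $\beta > 0$ and $p_i > 0$ throughout. A secondary bit of bookkeeping is to keep the roles of ``labels'' ($m_i$) and ``selected true positives'' ($TP_i$) cleanly separated, since $\beta_i$ and $p_i$ normalize the same numerator $TP_i$ by different denominators ($m_i$ versus $A_i$).
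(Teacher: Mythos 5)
Your proof is correct and follows essentially the same route as the paper's: both express the count of selected true positives in two ways (as $\beta_i m_i$ and as $p_i A_i$), impose equal TPR, and rearrange the resulting identity to get the stated equivalence. Your additional care about the degenerate case $\beta = 0$ (where the precisions are undefined) is a reasonable bit of bookkeeping that the paper leaves implicit, but it does not alter the argument.
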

\begin{proof}
Note that:
\begin{align*}
    p_i = \frac{\text{True Positive Predictions}}{\text{All Positive Predictions}}\implies \text{True Positives}_i = A_i p_i.
\end{align*}
Then the true positive rate can be written as
\begin{align*}
    \beta_i = \frac{\text{True Positive}_i}{\text{Positives}_i} = \frac{A_i p_i}{m_i}.
\end{align*}
But by assumption, $\beta_1 = \beta_2 = \beta$, so
\begin{align*}
    \frac{A_1 p_1}{m_1} = \frac{A_2 p_2}{m_2}.
\end{align*}
But this implies that \begin{align*}
    \frac{A_1}{A_2} = \frac{m_1}{m_2} \frac{p_2}{p_1}.
\end{align*}
Hence, $A_2 \geq A_1$ if and only if $\frac{m_1}{m_2}\frac{p_2}{p_1} \leq 1$, or in other words:
\begin{align*}
A_2 \geq A_1 \iff \frac{m_1}{m_2}\leq    \frac{p_1}{p_2}.
\end{align*}
\end{proof}
To interpret this lemma, suppose that Group 2 has a higher misreport rate than Group 1 by some factor. Then the lemma states that for any selection process satisfying equal true positive rates, monotonicity with respect to misreport rate requires precision in Group 2 greater than in Group 1 by at least the same factor, and vice versa. 
\subsection{Equalized Odds and Monotonicity}
%

The following lemma shows that, in this setting, any allocation that satisfies equalized odds (i.e. $\alpha_1=\alpha_2=\alpha$ and $\beta_1=\beta_2=\beta$) must audit the group with a \emph{higher} misreport rate at a \emph{higher} rate if the true positive rate is \emph{larger} than the false positive rate; conversely, it must audit the group with a \emph{higher} misreport rate at a \emph{lower} rate if the true positive rate is \emph{lower} than the false positive rate. 
\begin{lemma}\label{lemma:eo_mono}
Suppose that the allocation $A_1, A_2$ satisfies equalized odds. That is, $\alpha_1=\alpha_2=\alpha$ and $\beta_1=\beta_2=\beta$. If $\beta \geq \alpha$, then $A_2 \geq A_1 \iff m_2 \geq m_1$; otherwise, $A_2 \geq A_1 \iff m_1 \geq m_2$.
\end{lemma}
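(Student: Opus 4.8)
The plan is to reduce the statement to a single linear identity relating the number of audits $A_j$ in each group to that group's number of positive labels $m_j$. First I would decompose the audits in group $j$ into true and false positives, $A_j = \text{TP}_j + \text{FP}_j$, since every audited observation is either a correctly flagged misreporter or a wrongly flagged compliant taxpayer. Using the definitions from the setup, the true positives selected satisfy $\text{TP}_j = \beta_j m_j$ and the false positives satisfy $\text{FP}_j = \alpha_j r_j = \alpha_j (n - m_j)$, so that
\[
A_j = \beta_j m_j + \alpha_j (n - m_j).
\]

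Next I would impose the equalized-odds hypothesis $\alpha_1 = \alpha_2 = \alpha$ and $\beta_1 = \beta_2 = \beta$ and collect terms, which yields the key identity
\[
A_j = \alpha n + (\beta - \alpha)\, m_j .
\]
Differencing across the two groups kills the constant term and leaves $A_2 - A_1 = (\beta - \alpha)(m_2 - m_1)$. The lemma then follows immediately by casework on the sign of $\beta - \alpha$: when $\beta \geq \alpha$ the factor $(\beta - \alpha)$ is nonnegative, so $A_2 \geq A_1$ holds exactly when $m_2 \geq m_1$; when $\beta < \alpha$ the factor is negative and reverses the inequality, giving $A_2 \geq A_1$ exactly when $m_1 \geq m_2$.

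I do not anticipate a genuine analytic obstacle here: the entire content is the additive decomposition $A_j = \text{TP}_j + \text{FP}_j$ together with the observation that equalized odds forces the coefficient of $m_j$ to be identical across the two groups, so that the group with more positives is audited more (resp.\ less) precisely when $\beta$ exceeds (resp.\ falls below) $\alpha$. The only points requiring minor care are the boundary case $\beta = \alpha$, where the identity forces $A_1 = A_2$ and both stated equivalences hold through equality, and making sure the ``if and only if'' is read in the correct direction once the sign of $\beta - \alpha$ flips the inequality.
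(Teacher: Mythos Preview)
Your proposal is correct and follows essentially the same approach as the paper: decompose $A_j$ into true plus false positives, express these via $\beta m_j$ and $\alpha(n-m_j)$, difference to obtain $A_2 - A_1 = (\beta - \alpha)(m_2 - m_1)$, and read off the biconditional from the sign of $\beta - \alpha$. Your intermediate form $A_j = \alpha n + (\beta - \alpha)m_j$ is a slightly cleaner way to see the cancellation, and your explicit handling of the $\beta = \alpha$ boundary is a small addition, but the argument is otherwise identical to the paper's.
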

\begin{proof}
Note that $A_1$ is the sum of true and false positives in $G_1$ and $A_2$ is the sum of true and false positives in $G_2$. Since
\begin{align*}
    \alpha = \alpha_1 = \frac{\text{FP}_1}{r_1} \qquad \text{and} \qquad \beta = \beta_1 = \frac{\text{TP}_1}{m_1},
\end{align*}
we can observe that:
\begin{align*}
    A_1 = r_1 \alpha + m_1 \beta
\end{align*} 
and similarly for $A_2$. But then:
\begin{align*}
    A_2 - A_1 &= r_2 \alpha +m_1\beta - \left(r_1 \alpha + m_1 \beta\right)
    \\&=\alpha (r_2-r_1) + \beta(m_2-m_1)\\&= \alpha ((n-m_2)-(n-m_1)) + \beta (m_2-m_1) \\&= \alpha (m_1-m_2) + \beta (m_2-m_1) \\
    &= (\beta-\alpha)(m_2-m_1).
\end{align*}
But then we have that:
\begin{align*}
    A_2 - A_1 >0 \iff (\beta-\alpha)(m_2-m_1) >0,
\end{align*}
yielding the claimed result.
\end{proof}
Lemma \ref{lemma:eo_mono} shows that if the selection process as a whole satisfies equalized odds, then groups with higher misreport rates will be audited at a higher rate if and only if the process catches a larger fraction of misreporters than the fraction of non-misreporters it ensnares. In balanced settings and with good models, we might expect that generally the true positive rate will be higher than the false positive rate, and this is what provides intuition that imposing equalized odds might push the process towards monotonicity in misreport rate. But these rates interact with the overall audit budget: in the regime where the budget is very small and models are good, then it may be possible to obtain a low false positive rate but an \emph{even lower} true positive rate. In that case, equalized odds will require that the group with higher non-compliance is audited \emph{less}.

\section{Further Experimental Details}
\label{app:exp_details}
\begin{figure}
  \centering
    \includegraphics[width=\textwidth]{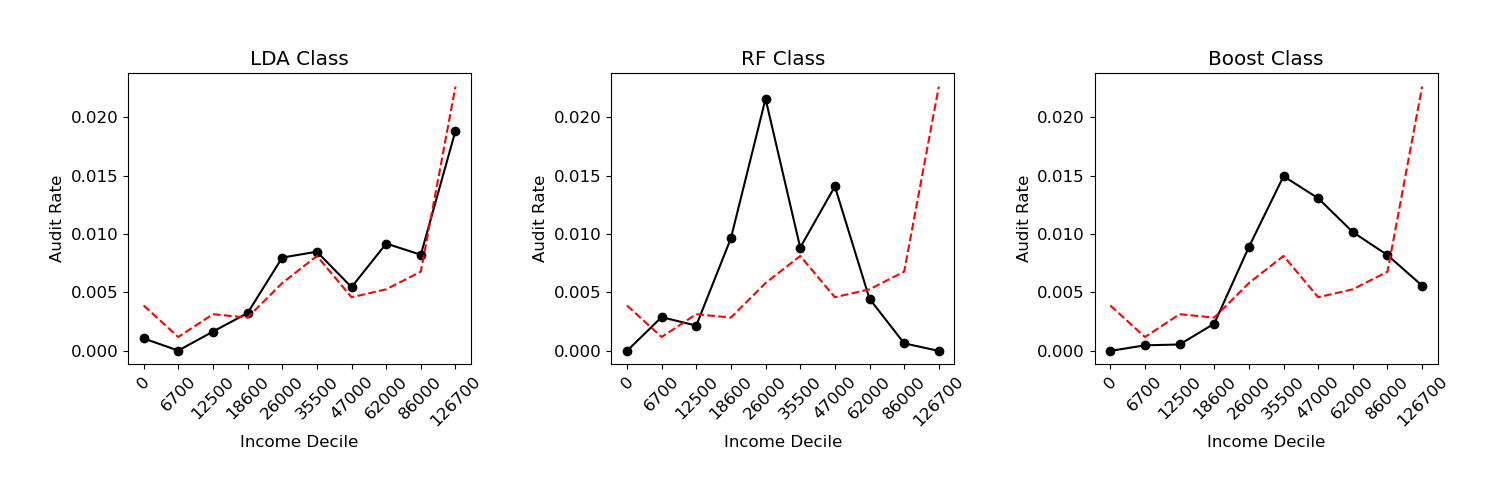}
    \caption{Audit rate over income deciles, for LDA, Random Forest, and XGBoost classifiers trained with unweighted datasets of size 100k, subsampled from the weighted NRP data.  (These allocations are in black, with oracle in red). 
    \label{app:fig:subsamp}}
\end{figure}

\begin{table}[t]
\centering
\small
\begin{tabular}{llc|ddddd}
\MC{1}{c}{Model Type} & \MC{1}{c}{Label} & \MC{1}{c|}{Subsampled} & \MC{1}{c}{Revenue}  & \MC{1}{c}{No-Change} & \MC{1}{c}{Cost} & \MC{1}{c}{Net Revenue} &\MC{1}{c}{Oracle}\\
 & \MC{1}{c}{Type} & \MC{1}{c|}{(Data Size)} &  \MC{1}{c}{(\$B)}  & \MC{1}{c}{Rate} &  \MC{1}{c}{(\$B)} & \MC{1}{c}{(\$B)} & \MC{1}{c}{Overlap}\\
\hline
Oracle & - & \texttimes & 29.40 & 0.0\%  &  0.33 & 29.07 & 1.00\\
LDA  & Class & \checkmark 11M & 6.07 & 12.8\% & 0.21 & 5.86 & 0.09\\
LDA  & Class & \checkmark 1100k & 6.61 & 16.0\% & 0.30 & 6.31 & 0.09\\
Random Forest & Class & \texttimes &  3.05 & 3.5\% & 0.08 & 2.97 & 0.00\\
Random Forest & Class & \checkmark  1100k &  3.19 & 4.5\% & 0.07 & 3.12 & 0.01\\
Grad Boost & Class & \texttimes & 4.05 & 4.2\% & 0.08 & 3.97 & 0.00\\ 
Grad Boost & Class & \checkmark  1100k & 3.72 & 4.7\% & 0.09 & 3.61 & 0.00\\
\end{tabular}
\caption{
Revenue, No-change rate, cost, and net revenue for models trained on a subsampled dataset of size 100k. No-change rate represents the percentage of audits that were allocated to compliant tax-payers; cost reflects cost to the IRS as described in Section~\ref{sec:cost}. These results reflect audit allocations which select the top $0.644$\% of taxpayers predicted most likely to misreport from each model. All metrics are reported on the test set, weighted using the sampling weights provided by the IRS to scale up to a representative sample of the US population. \label{app:tab:subsamp}}
\vspace{-0.2in}
\end{table}

In this paper, we compare LDA, Random Forest Classifier, Random Forest Regressor, Gradient Boost Classifier, and Gradient Boost Regressor models. We use the \emph{sklearn} python package \cite{scikit-learn} to implement all models except for gradient boosted models, and search for optimal hyperparameters using \textit{sklearn}'s \emph{GridSearchCV} method with 5-fold cross validation. Gradient boosted models are created through the XGBoost python package, and optimal hyperparameters are also found using GridSearchCV. 
We use NRP data from 2010-2014 to train all models in this paper, with dollar values scaled to 2014 values. Our threshold for determining what qualifies as a tax misreport is a \$200 difference between paid tax and amount owed.
We winsorize amount of misreport to the 1st and 99th percentiles. We split the data into train, test, and validation sets randomly. Our train and validation sets comprise 75\% of the data, with a test set of 25\% of the data.

We note that the IRS NRP data contains sampling weights, which are used to ensure that the NRP data is representative of the true underlying distribution of taxpayers~\cite{sample_weights}. 
We train all unconstrained models with sampling weights included in the NRP data using \textit{sklearn}'s built in data-weighting feature, except LDA, whose \emph{sklearn} implementation does not does not support training weights. 
For LDA, we create a representative dataset from the NRP data by randomly subsampling rows from the weighted training data according to the weights. For example, consider that each row $x$ has a weight $w$, and the sum of all weights in the training set is $W$. Then each observation has probability $\frac{w}{W}$ of getting selected as any given row in the subsampled data.
This produces an unweighted training set reflecting the same proportions as the weighted training data, with one million samples. As mentioned in Section~\ref{sec:fairlearn}, the \textit{FairLearn} package~\cite{bird2020fairlearn} requires the use of the \textit{sklearn} training weights feature to implement its in-process fairness enforcement algorithms. As a result, we also use the subsampling technique to create training sets for in-process fairness models, but with samples of $100,000$ points, as the algorithm is extremely time-intensive on large datasets (over 48 hours for one model). In order to show that the use of sampling weights during training, or the difference in training set size from 100k to 1M, does not strongly affect the results presented in the paper, we show the audit allocations and revenue, cost, and no-change rates of the LDA, Random Forest, and XGBoost classifiers in Figure~\ref{app:fig:subsamp} and Table~\ref{app:tab:subsamp} respectively.

All analyses sections are produced on the test set.  Cost and revenue calculations are reported by rescaling costs and revenues to reflect estimated annual values for the full population, for each year 2010-2014, and then dividing by five. 

We sort taxpayers by descending order of predicted \emph{misreport probability} 
from all classification models 
(using \textit{sklearn}'s \emph{predict\_proba()}) method,
in order to produce a ranking. We use \textit{sklearn}'s \emph{predict} method to return expected misreport for regression models.
We use an audit rate budget of $0.644$\% of the taxpayer population, reflecting the average audit rate from 2010-2014,  
and select audits $a_i$ by taking the top $0.644$\% of the taxpayer population in rank order. 
This $0.644$\% corresponds to weighted percentage of the population, computed with sampling weights, i.e. $\frac{\sum a_iw_i}{\sum w_i}$ where $i$ is an observation in the weighted dataset, $a_i$ is an indicator of whether to audit that observation, and $w_i$ is the number of people the observation represents to create a representative population from the sampling data. The audit budget of $0.644\%$ of the taxpayer population, is equivalent to 1125000 audits.

\section{Robustness Checks on Classification Thresholds}
\label{app:robustness_thresholds}

\begin{figure}
  \centering
    \includegraphics[width=\textwidth]{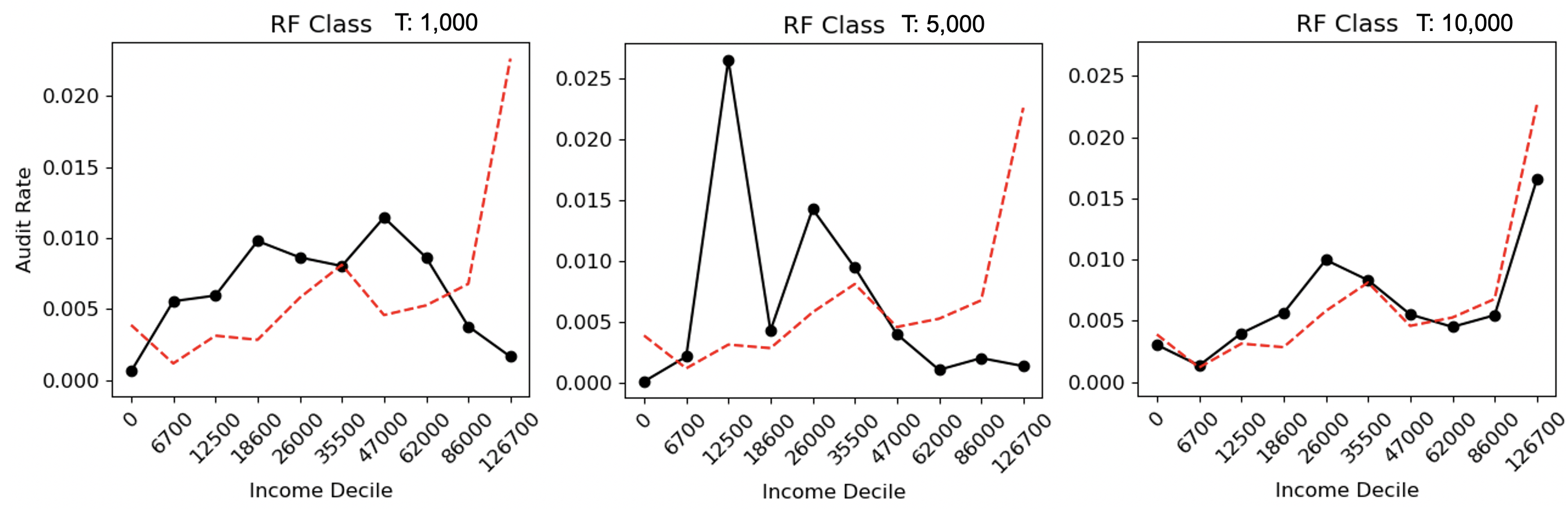}
    \caption{Audit rate over income deciles, for random forest classification models trained with different thresholds for what consitutes a significant amount of misreport. From left to right, we have the allocation for a model trained with a threshold of \$1,000, \$5,000, and \$10,000. (These allocations are in black, with oracle in red). 
    \label{app_fig:app_diff_thresh}}
\end{figure}

\begin{table}[t]
\centering
\small
\begin{tabular}{llc|dddd}
\MC{1}{c}{Model Type} & \MC{1}{c}{Label} & \MC{1}{c|}{} & \MC{1}{c}{Revenue}  & \MC{1}{c}{No-Change} & \MC{1}{c}{Cost} & \MC{1}{c}{Net Revenue} \\
 & \MC{1}{c}{Type} & \MC{1}{c|}{Threshold} &  \MC{1}{c}{(\$B)}  & \MC{1}{c}{Rate} &  \MC{1}{c}{(\$B)} & \MC{1}{c}{(\$B)} \\
\hline
Oracle & - & \texttimes & 29.40 & 0.0\%  &  0.33 & 29.07 \\
LDA  & Class & 200 & 6.07 & 12.8\% & 0.21 & 5.86\\
Random Forest & Class & 200 &  3.05 & 3.5\% & 0.08 & 2.97\\
Random Forest & Class & 1,000 & 4.92 & 5.6\% & 0.10  & 2.87 \\
Random Forest & Class & 5,000 & 6.48  & 43.6\% & 0.15 & 6.35\\
Random Forest & Class & 10,000 & 10.1  & 64.1\% & .45 & 10.55 \\
LDA & Class & 1,000 & 6.3 & 17.4\% & 0.20  & 6.1 \\
LDA & Class & 5,000 & 7.52  & 53.3\% & 0.30 & 7.22\\
LDA & Class & 10,000 & 9.0  & 70.8\% & .47 & 8.53 \\

\end{tabular}
\caption{
Revenue, No-change rate, cost, and net revenue for models with different thresholds for what constitutes a significant misreport. No-change rate represents the percentage of audits that were allocated to compliant tax-payers; cost reflects cost to the IRS as described in Section~\ref{sec:cost}. These results reflect audit allocations which select the top $0.644$\% of taxpayers (i.e. top 1125000 taxpayers) predicted most likely to misreport from each model. All metrics are reported on the test set, weighted using the sampling weights provided by the IRS to scale up to a representative sample of the US population. \label{app:diff_thresh_table}}
\vspace{-0.2in}
\end{table}

In this section, we compare the audit allocations of high-flexibility classification models (namely, random forest classifiers) with different thresholds for what constitutes a significant adjustment. In the main text, we use a threshold of \$200 to signify a significant misreport. In these experiments, we consider thresholds of 
\$1,000, \$5,000, and \$10,000. Experimental setup is identical to that described in Section~\ref{app:exp_details}, with the exception of the change in threshold. We display our results in Figure~\ref{app_fig:app_diff_thresh}, and Table~\ref{app:diff_thresh_table}. 

The results show us that changing the threshold of a significant adjustment to \$1,000 does not significantly impact audit allocation compared to the results presented in the main text. A threshold of \$5,000 exacerbates the classification model's excess focus on the lower end of the income spectrum, even beyond results shown in the main paper. Only a threshold of \$10,000 makes a significant difference in terms of the audit allocation---shifting the focus to high income individuals almost exclusively--- however, it results in an extremely high no-change rate.

\section{Increased Audit Focus on Lower-and-Middle Income only in High Complexity Models}
\label{app:log_reg}
\begin{figure}
  \centering
    \includegraphics[width=\textwidth]{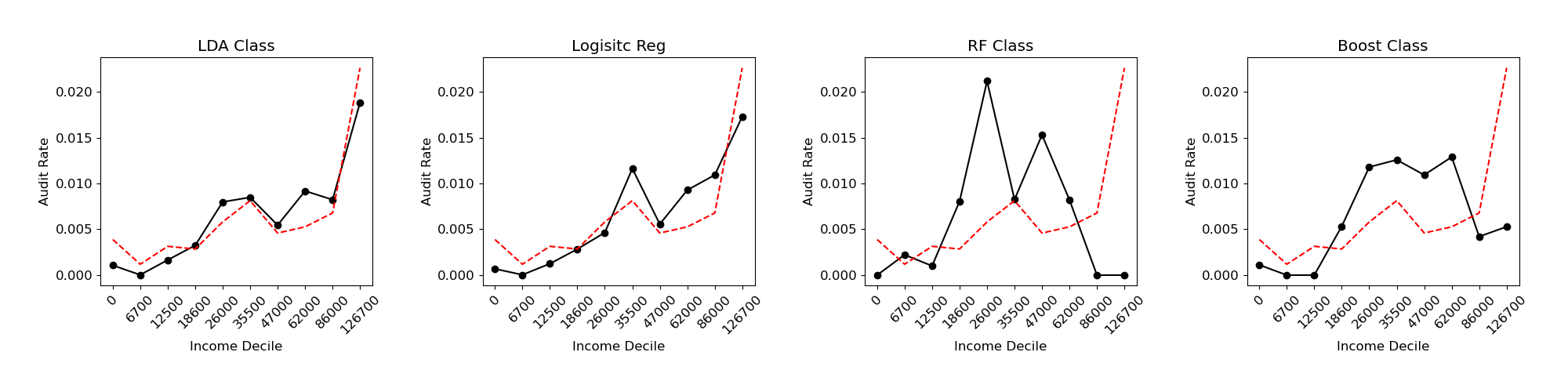}
    \caption{Audit rate over income deciles, for LDA, Logisitc Regression, Random Forest, and XGBoost classifiers trained on NRP data.  The new figure included in this graph, relative to the figures in the main paper, is the introduction of the logistic regression model. (These allocations are in black, with oracle in red). 
    \label{app:fig:log_reg}}
\end{figure}

\begin{table}[t]
\centering
\small
\begin{tabular}{llc|ddddd}
\MC{1}{c}{Model Type} & \MC{1}{c}{Label} & \MC{1}{c|}{Subsampled} & \MC{1}{c}{Revenue}  & \MC{1}{c}{No-Change} & \MC{1}{c}{Cost} & \MC{1}{c}{Net Revenue} &\MC{1}{c}{Oracle}\\
 & \MC{1}{c}{Type} & \MC{1}{c|}{(Data Size)} &  \MC{1}{c}{(\$B)}  & \MC{1}{c}{Rate} &  \MC{1}{c}{(\$B)} & \MC{1}{c}{(\$B)} & \MC{1}{c}{Overlap}\\
\hline
Oracle & - & \texttimes & 29.40 & 0.0\%  &  0.33 & 29.07 & 1.00\\
LDA  & Class & \checkmark  11M & 6.07 & 12.8\% & 0.21 & 5.86 & 0.09\\
Random Forest & Class & \texttimes &  3.05 & 3.5\% & 0.08 & 2.97 & 0.00\\
Grad Boost & Class & \texttimes & 4.05 & 4.2\% & 0.08 & 3.97 & 0.00\\ 
Log. Reg. & Class & \texttimes & 5.42 & 15.3\% & 0.19 & 5.23 & 0.06\\

\end{tabular}
\caption{
Revenue, No-change rate, cost, and net revenue for models presented in the paper alongside results for a logistic regression model. No-change rate represents the percentage of audits that were allocated to compliant tax-payers; cost reflects cost to the IRS as described in Section~\ref{sec:cost}. These results reflect audit allocations which select the top $0.644$\% of taxpayers predicted most likely to misreport from each model. All metrics are reported on the test set, weighted using the sampling weights provided by the IRS to scale up to a representative sample of the US population. \label{app:tab:log_reg}}
\vspace{-0.2in}
\end{table}

In this section, we provide results from a logistic regression model to further buttress the claim that only higher-complexity classification models result in audit allocations which exacerbate focus on lower and middle-income taxpayers. We train the Logistic Regression classification model with the same procedure outlined in Appendix~\ref{app:exp_details}, with sampling weights directly included during training. The audit allocation is depicted in Figure~\ref{app:fig:log_reg}: the allocation is more monotonic than the higher complexity classification models; and is apparent in Table~\ref{app:tab:log_reg}, the no-change rate is higher, but the revenue is higher as well.

\section{Additional Robustness Checks}
As noted in the main text, we make several important choices. First, we focus on total positive income (TPI), rather than adjusted gross income (AGI; roughly corresponding to the taxpayer's total net income) because it it represents a simple measure of earnings that is less likely to be affected by audit determinations. Second, for our analysis of the status quo, we do not differentiate between EITC-specific audits for EITC claimants (e.g. qualifying child eligibility) and income-centered audits (e.g. confirmation of reported small business or self-employment income). As we note above, this distinction is not relevant for the purposes of an ultimate determination as to a liability to the government, but for operational purposes, it may be meaningful to understand which type of audit is driving the vertical equity findings.
Third, we focus on reported income figures rather than audit-adjusted figures. This is because, by definition, audit-adjusted income is not available to the IRS before auditing, so any policy or choice that relies on access to audit-adjusted income is unimplementable. However, audit-adjusted income may provide a better picture of distributional effects (at least for audited taxpayers). 

\subsection{Status Quo}

In this section, we consider how the alternative choices (using AGI, splitting up EITC and income audits, and measuring model outcomes with respect to audit-adjusted income) in turn affect our status quo findings. We interpret these results as primarily confirming our main results. 

\paragraph{Adjusted Gross Income} 
First, we consider whether our motivating stylized facts --- that low-income taxpayers are audited at rates about as high as very-high income taxpayers despite change rate being monotonic in income and average adjustment being much higher for high income taxpayers --- is dependent on the choice of TPI rather than AGI. We thus recreate the left-most and right-most panels of Figure \ref{fig:motivation} with AGI as our feature in the x-axis. We use NRP data, which is selected via stratified random sampling, as before to avoid selection bias. 

The left panel of Figure \ref{fig:robust-agi} shows the 2014 audit rate for taxpayers in each \$10,000-wide bin of AGI. The figure shows that the large spike near $0$ observed with respect to TPI remains for AGI as well. However, the graph looks different in that AGI, unlike TPI, can be negative; the negative-AGI portion of the graph qualitatively resembles a (much noisier) mirror image of the non-negative-AGI porion, though negative-AGI taxpayers made up just over 1\% of all taxpayers according to NRP data. 

The right panel of Figure \ref{fig:robust-agi} depicts change rate and average adjustment across AGI bins. Here, the bins consist of AGI deciles for non-negative AGI taxpayers augmented by a single bin for all negative-AGI taxpayers. Excluding the negative-AGI bin, the change rate and average adjustment follow a qualitatively similar trend to their counterparts observed on TPI. That is, the change rate increases nearly monotonically, while the average adjustment is increasing overall but has a decreasing or flat portion. However, the overall difference between the average adjustment in the highest AGI bin and highest average adjustment among the lower-AGI bins is smaller than for TPI. As for the negative AGI bin, it has a relatively low (compared to other bins) change rate, but a higher average adjustment than any positive-AGI bin. Recall that AGI is income less various adjustments (e.g. for student loan interest, alimony payments, health insurance for self-employed taxpayers, etc.). As mentioned, given additional scope relative to TPI for errors, subjective determinations, or manipulation to influence ultimate AGI figures, we focus on TPI as our primary measure of income. 

\begin{figure}[H]
    \begin{subfigure}{0.45\textwidth}
    \centering
        \includegraphics[scale=0.30]{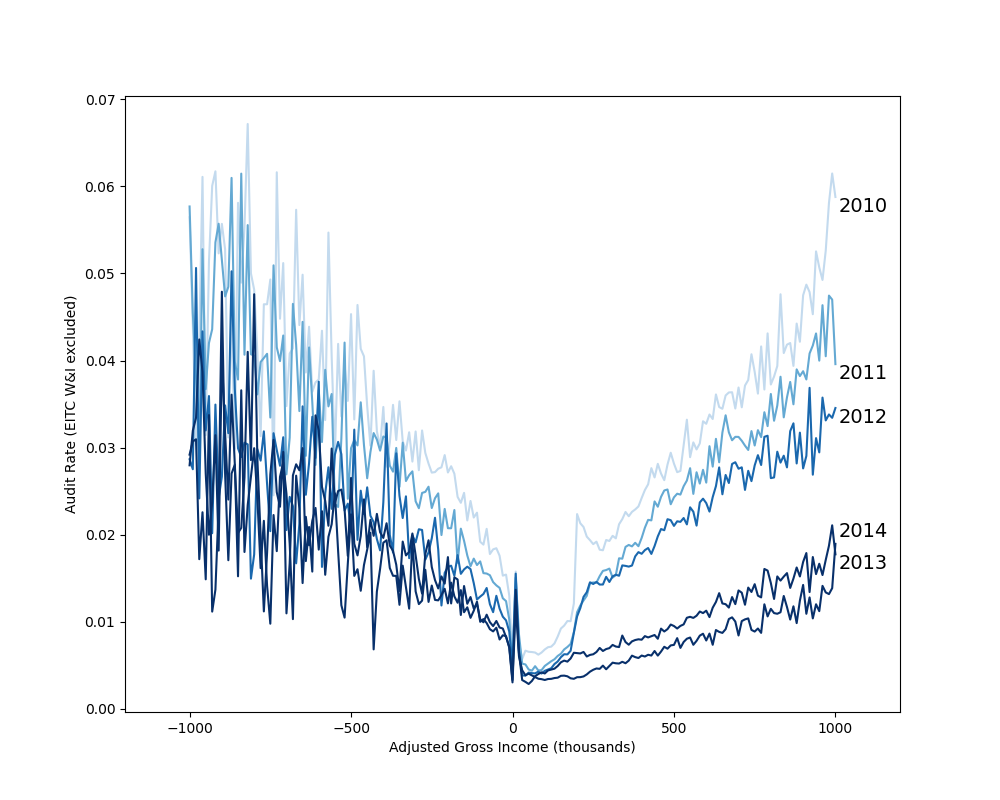}
\end{subfigure}
    \begin{subfigure}{0.45\textwidth}
    \centering
        \includegraphics[scale=0.30]{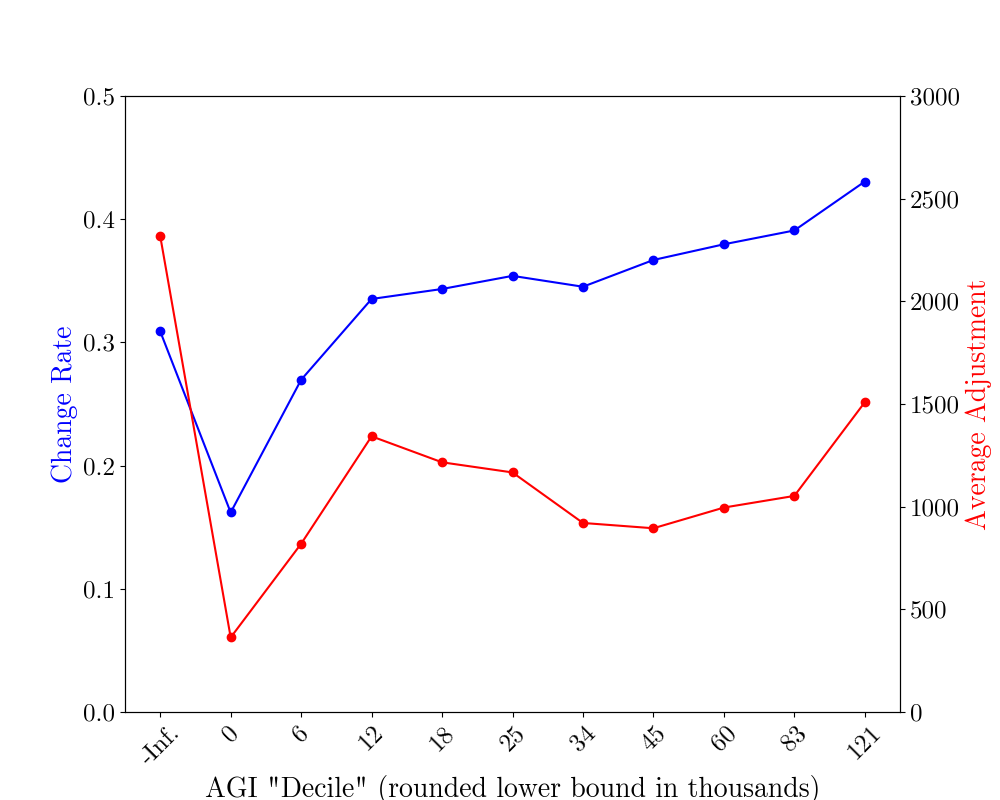}
\end{subfigure}
\caption{Robustness checks with adjusted gross income. Left: The figure shows the audit rate by year at a given amount of adjusted gross income (discretized into bins of \$10,000. Note that AGI may be negative; however, just over 1\% of NRP observations submit negative AGI, so the noise in the left half of the graph is due to small sample size. Right: The figure shows outcomes in terms of misreport rate and average adjustment by AGI ``deciles'' (we compute deciles for observations non-negative AGI and add all negative AGI observations as an additional initial bin). \label{fig:robust-agi}}
\end{figure}

\paragraph{Income vs. EITC Audits}
Next, we explore whether the extent to which the observed non-monotonicity in audit rates by income is driven primarily by income-related audits (e.g. verifying that claimed income was truly received, that reported income presents a full picture of true income, etc.) or eligibity-related audits (e.g., whether a claimed dependent satisfies residency or relationship tests for EITC eligibility). To do this, we replicate our main audit-rate analysis after removing dependent-related audits. 
We do this using \emph{project codes}. Projects codes are given to returns upon audit and correspond to a focus on particular issues. These do not necessarily map one-to-one with the income/EITC distinction --- for example, some project codes correspond to a particular flag being triggered, and can result in focus on both eligibility and/or income issues depending on the return; still, careful examination of the issues considered allow us to develop an approximate measure of the intent of the audit.\footnote{We started with a list of project codes, project titles, and project descriptions. We examined all projects with EITC-related words in the title (e.g.  ``EITC" or ``EIC"), as well as all projects indicated to be related to EITC by 4.19.14.4 in the Internal Revenue Manual.}  

We categorize EITC-related projects into three categories: most narrowly, \emph{EITC-eligibility projects}, which only consider questions related to whether a taxpayer's EITC claim satisfies eligibility requirements; more generally, \emph{EITC-Only} projects, which may consider more than eligibility but are still related to the EITC claim (e.g. verifiability of Schedule C income for EITC claimants); and most broadly, \emph{EITC-mentioning} projects, which constitute any project which mentions EITC as the population of interest. So, for instance, audits about the premium tax credit within EITC claimants would be considered as part of the \emph{EITC-mentioning} projects but not the \emph{EITC-Only} or \emph{EITC-eligibility} projects. Note that these categories are nested, so if we move from \emph{excluding} only the first to the next to the last we end up with a successively narrower set of included audits. In particular, the set of audits that fall into \emph{EITC-eligibility} projects but not \emph{EITC-Only} projects are those which correspond strictly to eligibility questions, and so the effect of removing them shows (a lower bound on) the portion of audits which are due to eligibility and not income. (It is a \emph{lower bound} because some projects in the \emph{EITC-Only} do not only focus on income, but may also focus on eligibility; without further detail unavailable in our data, we cannot further distinguish between specific issues considered for each return within the same project code.)

Figure \ref{fig:audits-no-eitc-projects} shows the results of this analysis for the tax year 2014. The figure depicts audit rate by TPI, but with several different lines indicating different levels of exclusions that have been made when calculating the audit rate. The shading increases with the breadth of exclusions (no exclusions, corresponding to our results in Figure \ref{fig:motivation}, are plotted in lightest red, while the broadest exclusions, of all projects with any mention of EITC at all, are plotted in darkest red). Notice that the lightest color shows the `spike' in audit rates for low income taxpayers, as displayed before, and excluding successively more returns unsurprisingly diminishes the calculated audit rate, until we are left with very few audits that are entirely unrelated to EITC claims for near-zero TPI taxpayers. Most interestingly, moving from no exclusions to excluding EITC-eligibility-specific projects decreases the audit rate at the spike from about 1.2\% to about .7\%. This indicates that, as a lower bound, about half of the spike is explained by EITC-eligibility-related projects. 


\begin{figure}[H]
    \centering
    \includegraphics[scale = 0.5]{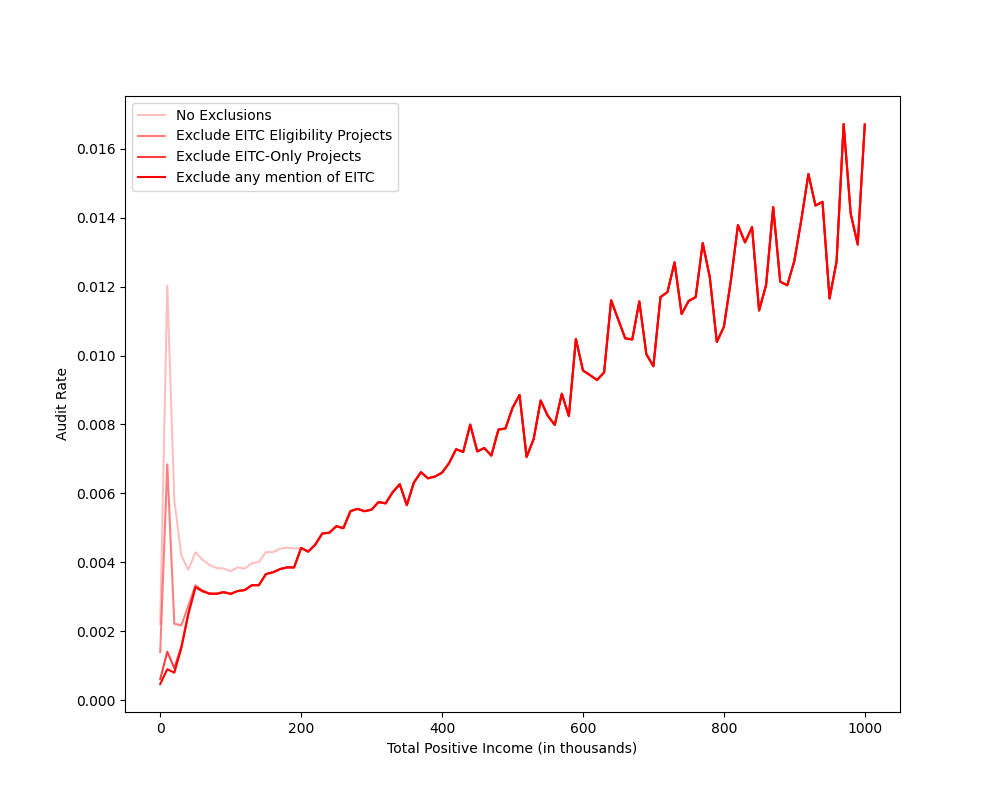}
    \caption{Audit rate by TPI for tax year 2014 after excluding EITC-related projects of varying stringency of definition. The shades of lines move from light to dark mirroring how the consider exclusions move from very little to very broad. In particular, the lightest shade shows audit rate before any exclusions, the next shows audit rate after excluding projects related specifically to EITC eligibility, the next after excluding all projects related \emph{only} to EITC, and the darkest after excluding all projects which mention EITC even if focused on unrelated issues. }
    \label{fig:audits-no-eitc-projects}
\end{figure}

More coarsely, we can simply look at to what extent the spike is being driven by EITC claimants at all, as indicated by claimants' \emph{activity codes}. Activity code 270 correspond to EITC claimants with less than \$25,000 of Schedule C (non-wage) income (e.g. income from self-employment), while activity code 271 captures the remainder. (Recall that income for the purposes of the EITC is not TPI, but AGI, as described above. So it is possible, though rare, for a taxpayer with high TPI to nonetheless be eligible for the EITC.) Figure \ref{fig:audits-no-eitc-acs} displays the results of a similar exercise, moving from excluding 270 to excluding 270 and 271. The fact that the spike is essentially eliminated moving from no exclusions to excluding 270 suggests that  non-monotonicity is driven by EITC claimants. (Note that this is not inconsistent with Figure \ref{fig:audits-no-eitc-projects} because EITC claimants in 270 may be audited for non-eligibility matters, like income verification.)   
\begin{figure}[H]
    \centering
    \includegraphics[scale = 0.5]{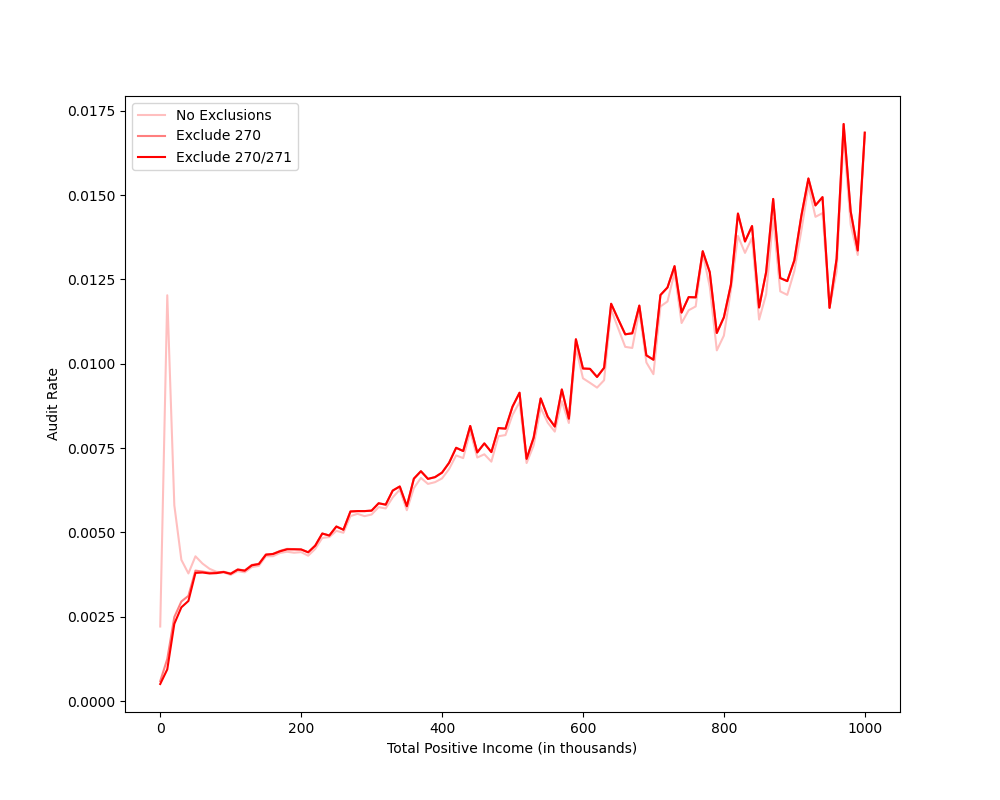}
    \caption{Audit rate by TPI for tax year 2014 after excluding EITC-related activity codes. The lightest line corresponds to the underlying audit rate without exclusions, the next darkest to the audit rate after excluding activity code 270, and the darkest to after removing 270 and 271 (i.e. all EITC claimants).}
    \label{fig:audits-no-eitc-acs}
\end{figure}

\paragraph{Outcomes with respect to true TPI} 
Finally, we recalculate no-change rates and average adjustments by corrected, rather than reported, TPI and AGI. (Note that since outcomes are measured in NRP, we have corrected incomes for nearly all taxpayers, modulo a small number of missing observations.) The outcomes are displayed in Figure \ref{fig:outcomes-cor}. Qualitatively, the TPI picture (left panel) looks similar to the right panel of Figure \ref{fig:motivation}, but with an even clearer monotonicity pattern in average adjustment, as the downward trend in adjustments in between the 3rd-7th bins of (uncorrected) TPI is replaced by a plateau. Moreover, measured according to corrected TPI, the average adjustment is higher in the highest-income bin than according to reported TPI, but lower in the lower-income bins; in other words, the overall trend is much starker for corrected than reported TPI. The AGI picture (right panel) appears qualitatively very similar to the TPI picture, indicating that monotonicity of change rate and adjustment holds regardless of income measure, at least after correcting for the truth.  
\begin{figure}[H]
    \begin{subfigure}{0.4\textwidth}
    \centering
    \includegraphics[scale=0.25]{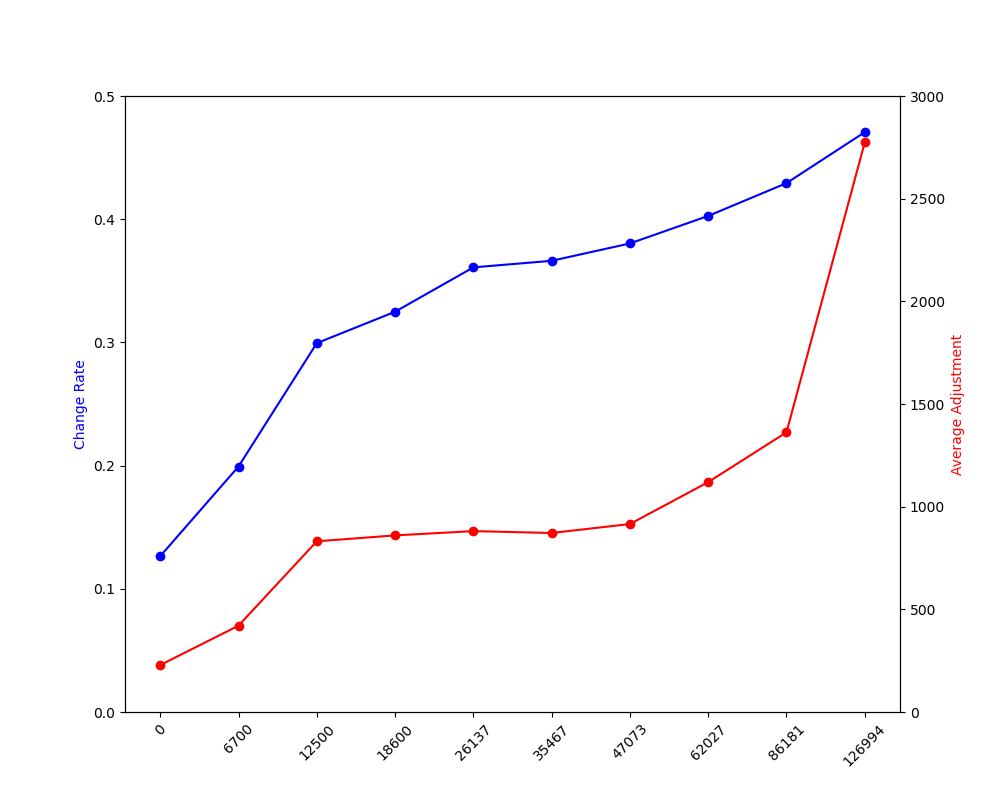}
\end{subfigure}
\begin{subfigure}{0.4\textwidth}
    \centering
        \includegraphics[scale=0.25]{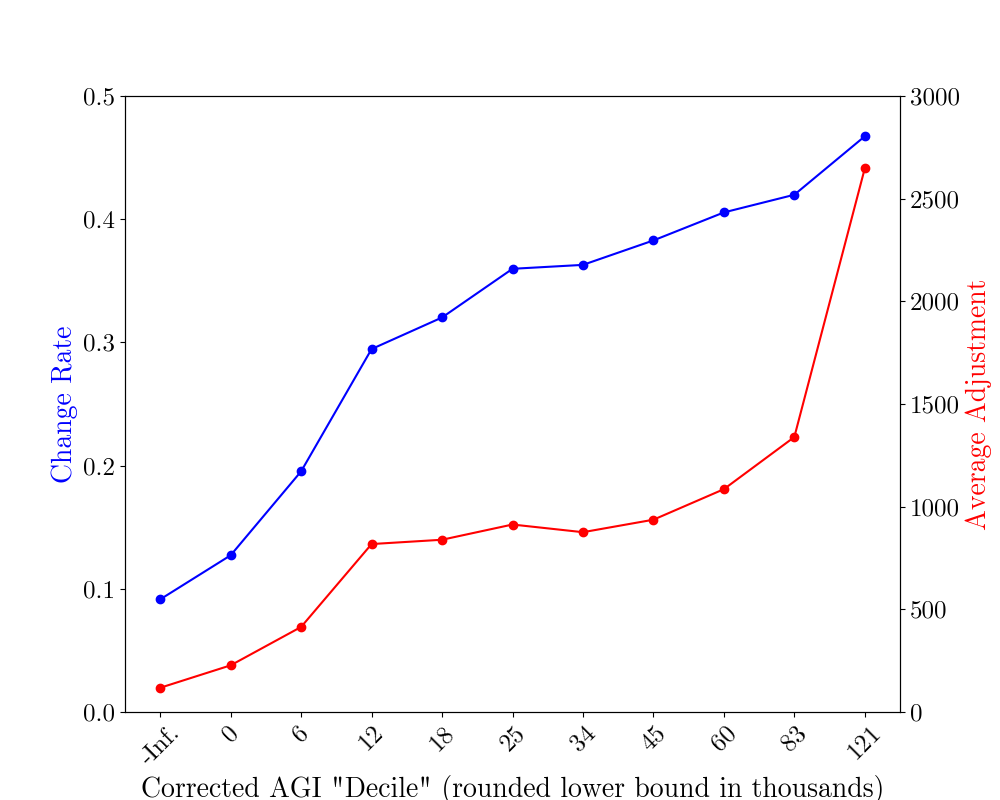}
\end{subfigure}
    \caption{The figures display outcomes --- no-change rate, in blue and measured on the left y-axis, and average adjustment, in red and measured on the right y-axis --- by corrected TPI (left panel) and corrected AGI (right panel). \label{fig:outcomes-cor}}
\end{figure}

\subsection{Fairness methods and Modeling Choices }

In this section, we display audit rate by income of classification, regression, and fairness-constrained models 
presented in the main paper, but with income buckets over \emph{audit-adjusted adjusted gross income} (AA-AGI), and \emph{audit-adjusted total positive income} (AA-TPI). This provides a robustness check to test whether models which display low audit focus on \emph{reported} low income  also do so on \emph{true} low income populations, and if this pattern carries over to other notions of income, such as taxable (and not total) income.


\paragraph{Experimental Setup.} For AA-TPI, we use the same income buckets as we have throughout the paper (which determine deciles on total positive income) for consistency and ease of comparison. 
For AA-AGI, we re-compute buckets, and also create a separate bucket for individuals with negative AGI, but note that they only make up approximately 0.7\% of the population (less than 1/10 of a decile), and thus the results on this population are not directly comparable to those on the rest of the deciles due to the vastly different sample size. For both measures of income, approximately 1,000 out of 71,000 rows do not contain audit-adjusted AGI or TPI, which we exclude from the analysis. 

\paragraph{Results}
The audit distributions over income deciles over AA-AGI and AA-TPI are  largely similar. For AA-AGI, the boosted regressor focuses slightly less on middle-to-high income.
For both AA-TPI and AA-AGI, the EO constrained classifier focuses lightly less on middle income individuals ($\sim$47k).
Regression and LDA models 
select a high rate for 
individuals with negative AA-AGI, but this is drawn from a very small percentage of the population (0.7\%).
Otherwise, the overall trends of audit focus for audit focus across the different classifiers remains the same. 

The most notable change from reported TPI to AA-TPI and AA-AGI is the extent to which the oracle focuses on ``truly'' high income individuals --- whereas the oracle audited up to 1\% individuals with zero and middling reported TPI, from the perspective of AA-TPI and AGI, the oracle focuses almost exclusively on the upper third of the income spectrum, and most dramatically (approx 4.5\%, as opposed to approx. 2\% for reported TPI) on the highest income decile.

\begin{figure}
  \centering
    \includegraphics[width=\textwidth]{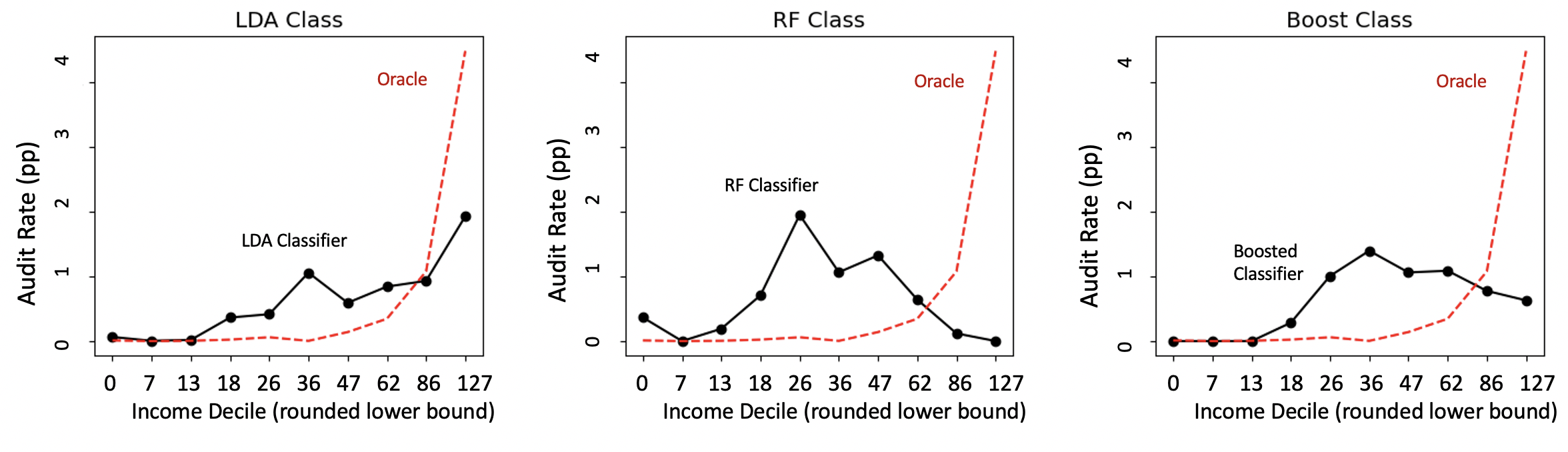}
    \caption{Audit rate by income for classification models.  From left to right: LDA classifier, Random Forest Classifier, and Boost Classifier. We use the same income deciles as presented throughout the paper for ease of comparison, but with corrected total positive income (after audit) as opposed to reported. Income decile lower bounds are given in thousands of dollars.
    \label{app_fig:class_models_tot_pos}}
\end{figure}

\begin{figure}
  \centering
    \includegraphics[width=4in]{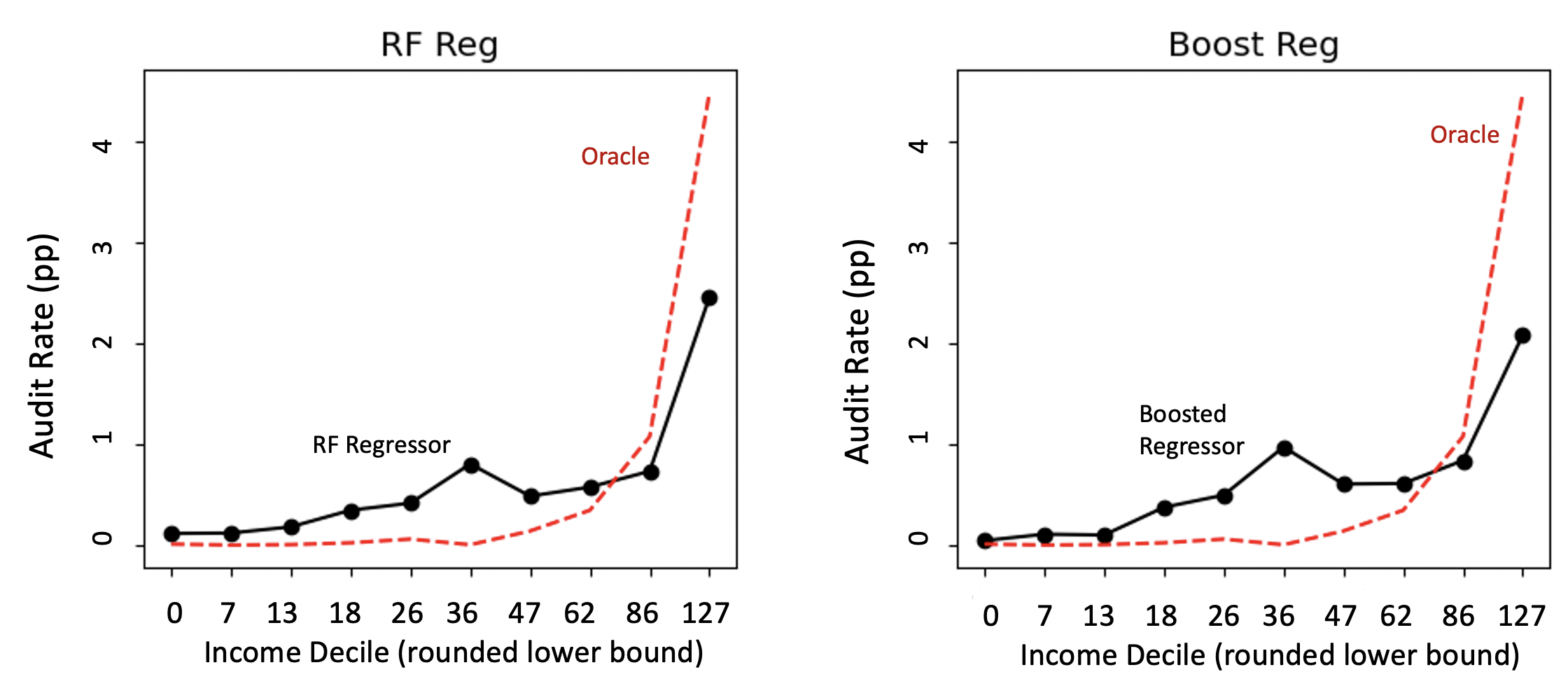}
    \caption{
    Audit rate by income for regression models. We use the same income deciles as presented throughout the paper for ease of comparison, but with corrected total positive income (after audit) as opposed to reported. Income decile lower bounds are given in thousands of dollars. 
    \label{app_fig:reg_models_tot_pos}}
\end{figure}

\begin{figure}
  \centering
    \includegraphics[width=\textwidth]{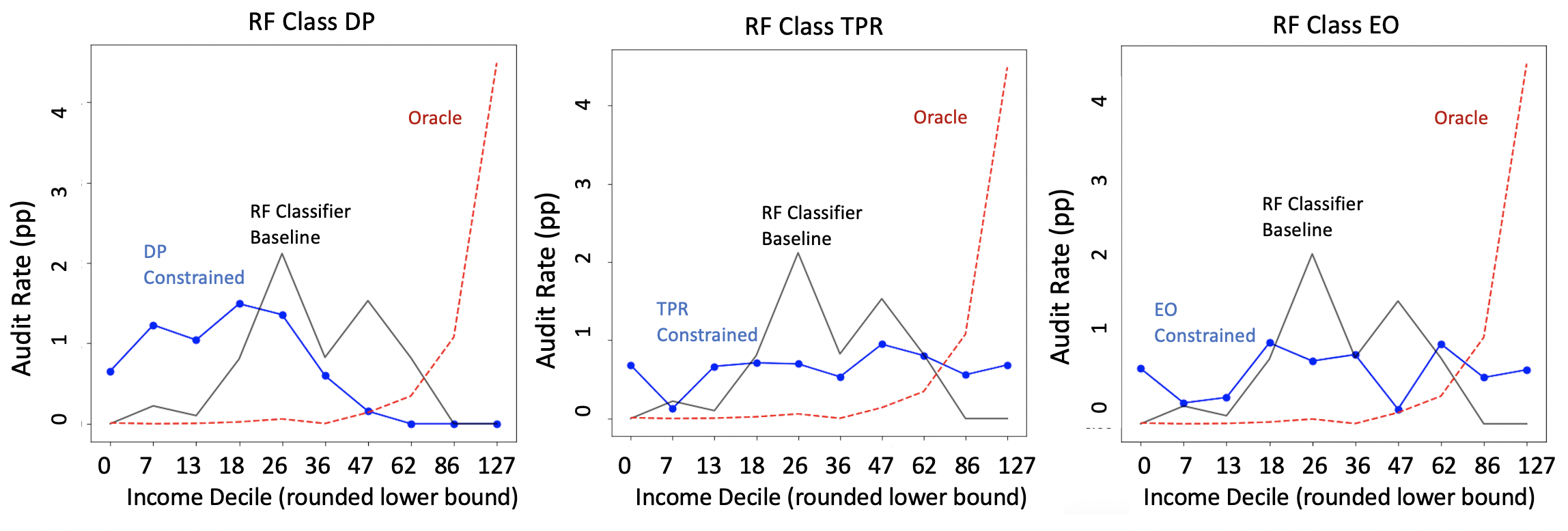}
    \caption{Audit rate by income from in-process fairness constrained random forest models, graphed over audited corrected TPI (AA-TPI).  We use the same income deciles as presented throughout the paper for ease of comparison, but with corrected total positive income (after audit) as opposed to reported. 
    \label{app_fig:fairness_tot_pos}}
\end{figure}

\begin{figure}
  \centering
    \includegraphics[width=\textwidth]{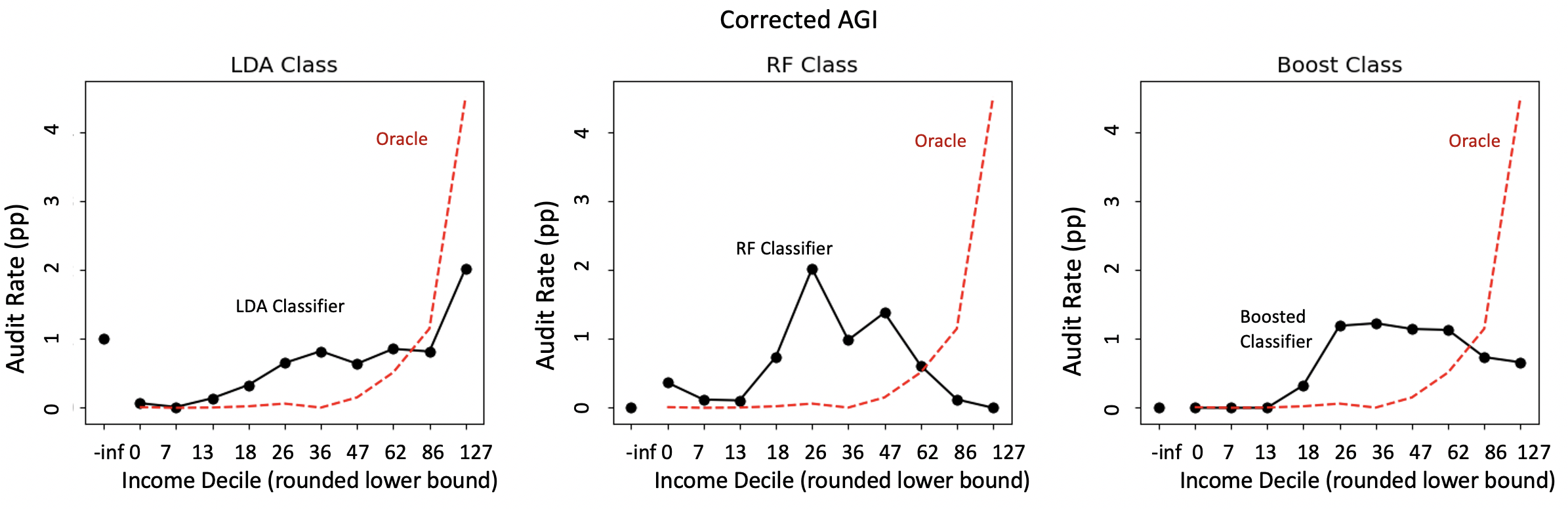}
    \caption{Audit rate by income for classification models. From left to right: LDA classifier, Random Forest Classifier, and Boosted Classifier. We plot over 10 AGI-derived deciles (0-127k are the lower-bounds), with an additional column for the taxpayers with negative corrected AGI. Note that the first column (-inf) is not a true decile, as individuals with true negative AGI make up less than 0.7\% of the population.
    \label{app_fig:class_models_AGI}}
\end{figure}

\begin{figure}
  \centering
    \includegraphics[width=\textwidth]{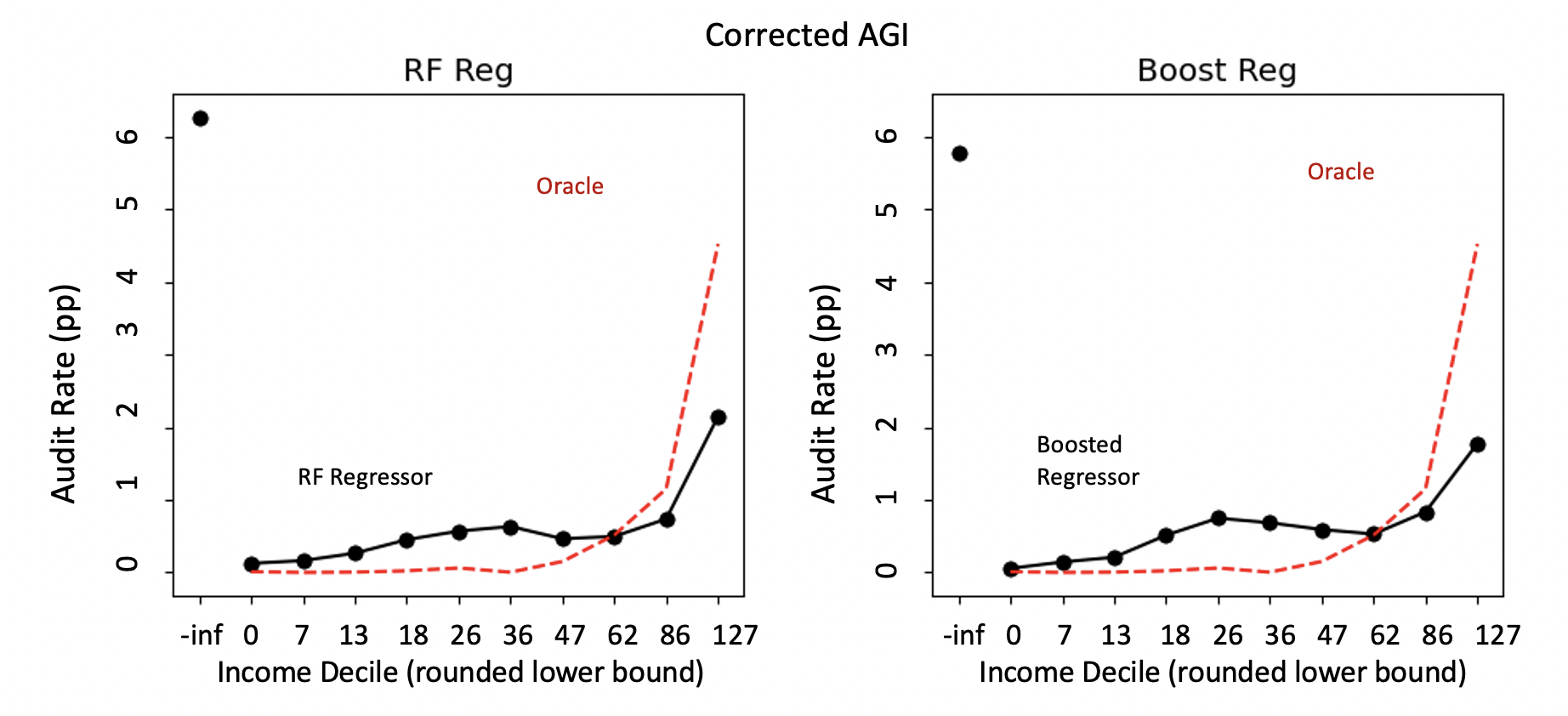}
    \caption{Audit rate by income for regression models. We plot over 10 AGI-derived deciles (0-127k are the lower-bounds), with an additional column for the taxpayers with negative corrected AGI. Note that the first column (-inf) is not a true decile, as individuals with true negative AGI make up less than 0.7\% of the population. 
    \label{app_fig:reg_models_AGI}}
\end{figure}

\begin{figure}
  \centering
    \includegraphics[width=\textwidth]{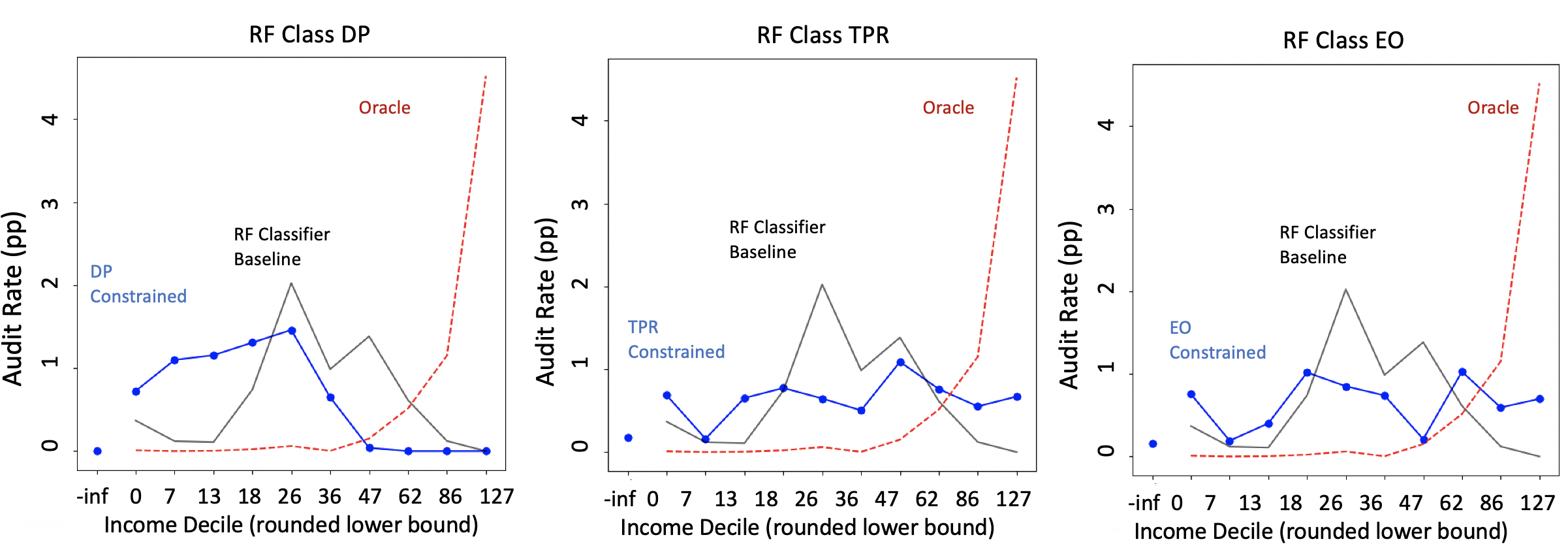}
    \caption{Audit rate by income from in-process fairness constrained random forest models, graphed over audited corrected AGI. We plot over 10 AGI-derived deciles (0-127K are the lower-bounds), with an additional column for the taxpayers with negative corrected AGI. Note that the first column (-inf) is not a true decile, as individuals with true negative AGI make up less than 0.7\% of the population. Income decile bounds are given in thousands.
    \label{app_fig:fairness_agi}}
\end{figure}

\section{Further Fairness Results}
\label{app:more_fairness}
In this section, we present complete in-processing results, and also show results from another technique, specifically, post-processing techniques for enforcing fairness constraints. We also discuss why pre-processing techniques, and perhaps counterintuitively, fair ranking methods are not well-suited to our setting. 

\subsection{In-processing}
As noted in Section~\ref{sec:fairlearn}, the in-processing results do not result in audit allocations which respect the fairness constraints the models are trained to obey, partially due to the fact that the audit allocation focuses only on the top 0.644\% of predictions. First, we present (i) numerical evidence that in-process fairness constrained models do not produce allocations which respect the constraints they are trained to satisfy (Tables~\ref{app:tab:fpr_budgeted} and ~\ref{app:tab:tpr_budgeted}), (ii) we show evidence that the in-processing results did perform according to expectation, i.e., they do produce models which satisfy their respective constraints over the \emph{full suite of predictions} on the training set, in Table~\ref{app:tab:sanity_check}. 

We present only numeric clarification for the fact that the allocations do not satisfy the constraints which are enforced on the model for true positive and false positive rates, as the fact that selection rate parity is not upheld is clear from the graph of the allocation (as an allocation which satisfies selection rate parity would have equal audit rate across all income groups). 

We note that we present the true and false positive rates calculated over the \emph{weighted} population---i.e. calculating all metrics taking into account the sample weight of each row---as well as over the unweighted raw data. This is due to the fact that the algorithm used to implement these results do not offer any guarantees over weighted data~\cite{agarwal2018reductions}. However, we find that the results are qualitatively similar.

\begin{table}[]
\begin{tabular}{l|llll|llll}
\toprule
& \multicolumn{8}{c}{In-Process Fairness Method: False Positive Rates}\\

& \multicolumn{4}{c}{Unweighted} & \multicolumn{4}{c}{Weighted (W)}\\
\midrule
Income \\ Bucket  & Uconstr. & SR PAR & TPR PAR & EO    & Unconstr. W & SR PAR W & TRP Par W & EO W  \\
\toprule
0                     & 0.000         & 0.008   & 0.002    & 0.011 & 0.000           & 0.006     & 0.000      & 0.005 \\
7                     & 0.000         & 0.008   & 0.003    & 0.001 & 0.000           & 0.009     & 0.002      & 0.004 \\
13                    & 0.000         & 0.012   & 0.002    & 0.006 & 0.000           & 0.010     & 0.001      & 0.003 \\
18                    & 0.000         & 0.016   & 0.000    & 0.002 & 0.000           & 0.010     & 0.000      & 0.007 \\
26                    & 0.006         & 0.009   & 0.002    & 0.006 & 0.004           & 0.007     & 0.000      & 0.006 \\
36                    & 0.000         & 0.003   & 0.005    & 0.015 & 0.000           & 0.002     & 0.001      & 0.003 \\
47                    & 0.000         & 0.000   & 0.000    & 0.007 & 0.000           & 0.000     & 0.000      & 0.009 \\
62                    & 0.000         & 0.000   & 0.004    & 0.010 & 0.000           & 0.000     & 0.003      & 0.012 \\
86                    & 0.000         & 0.000   & 0.005    & 0.008 & 0.000           & 0.000     & 0.004      & 0.018 \\
126                   & 0.000         & 0.000   & 0.000    & 0.008 & 0.000           & 0.000     & 0.000      & 0.007 \\
\toprule
& \multicolumn{8}{c}{Post-Process Fairness Method: False Positive Rates}\\

\midrule
Income \\ Bucket  & Unconstr. & SR PAR & TPR PAR & EO    & Unconstr. W & SR PAR W & TRP Par W & EO W  \\
\toprule
0                     & 0.000         & 0.000   & 0.000    & 0.000 & 0.000           & 0.000     & 0.000      & 0.000 \\
7                     & 0.000         & 0.000   & 0.000    & 0.000 & 0.000           & 0.000     & 0.000      & 0.000 \\
13                    & 0.000         & 0.000   & 0.000    & 0.000 & 0.000           & 0.000     & 0.000      & 0.000 \\
18                    & 0.000         & 0.000   & 0.000    & 0.000 & 0.000           & 0.000     & 0.000      & 0.000 \\
26                    & 0.006         & 0.006   & 0.006    & 0.000 & 0.004           & 0.004     & 0.004      & 0.000 \\
36                    & 0.000         & 0.000   & 0.000    & 0.000 & 0.000           & 0.000     & 0.000      & 0.000 \\
47                    & 0.000         & 0.000   & 0.000    & 0.000 & 0.000           & 0.000     & 0.000      & 0.000 \\
62                    & 0.000         & 0.000   & 0.000    & 0.000 & 0.000           & 0.000     & 0.000      & 0.000 \\
86                    & 0.000         & 0.000   & 0.000    & 0.000 & 0.000           & 0.000     & 0.000      & 0.000 \\
126                   & 0.000         & 0.000   & 0.000    & 0.020 & 0.000           & 0.000     & 0.000      & 0.022\\
\bottomrule
\end{tabular}
\caption{We present the false positive rates by income bucket for the audit allocations generated from unconstrained and fairness-constrained random forest classifier models on the \emph{test} set, where an audit allocation corresponds to the highest ranked predictions from each model up to a budget of 0.644\% of the taxpayer population, or 1125000 audits. Unconstr. refers to an unconstrained model, SR PAR to selection rate parity, TPR PAR to true positive rate parity, and EO to equalized odds. We note that the algorithms implemented in \textit{Fairlearn}\cite{bird2020fairlearn} only guarantee satisfying fairness constraints in expectation on the training set, over the entire set of predictions (i.e. not simply the top 0.64\%). Also note that the only column where we would expect to see equalized false positive rates is the equalized odds (EO) column(s). The top table represents results from in-process fairness methods, and the lower table from post-process fairness enforcement methods. The numbers in the left side (left four columns) of the table corresponds to the calculation on the raw data, without sample weights, and the right four columns display the calculation weighted by the sample weights, denoted with W. We present the unweighted calculation as the fairness methods do not guarantee equalized false positive rates over the weighted data, but rather only on the unweighted---however, false positive rates are not equalized with either calculation method. 
\label{app:tab:fpr_budgeted}}
\end{table}

\begin{table}[]
\begin{tabular}{l|llll|llll}
\toprule
& \multicolumn{8}{c}{In-Process Fairness Method: True Positive Rates}\\

& \multicolumn{4}{c}{Unweighted} & \multicolumn{4}{c}{Weighted (W)}\\
\midrule
Income \\ Bucket  & Uconstr. & SR PAR & TPR PAR & EO    & Unconstr. W & SR PAR W & TRP Par W & EO W  \\
\toprule
0                     & 0.000         & 0.015   & 0.021    & 0.014 & 0.000         & 0.011     & 0.034      & 0.014 \\
7                     & 0.015         & 0.029   & 0.010    & 0.010 & 0.012         & 0.032     & 0.011      & 0.010 \\
13                    & 0.008         & 0.015   & 0.015    & 0.013 & 0.007         & 0.011     & 0.020      & 0.013 \\
18                    & 0.018         & 0.024   & 0.015    & 0.022 & 0.027         & 0.025     & 0.015      & 0.022 \\
26                    & 0.045         & 0.019   & 0.016    & 0.009 & 0.056         & 0.022     & 0.018      & 0.009 \\
36                    & 0.019         & 0.015   & 0.016    & 0.013 & 0.025         & 0.011     & 0.014      & 0.013 \\
47                    & 0.027         & 0.000   & 0.026    & 0.006 & 0.040         & 0.000     & 0.030      & 0.006 \\
62                    & 0.007         & 0.000   & 0.018    & 0.018 & 0.012         & 0.000     & 0.015      & 0.018 \\
86                    & 0.001         & 0.000   & 0.017    & 0.013 & 0.002         & 0.000     & 0.009      & 0.013 \\
126                   & 0.000         & 0.000   & 0.009    & 0.010 & 0.000         & 0.000     & 0.016      & 0.010 \\
\toprule
& \multicolumn{8}{c}{Post-Process Fairness Method: True Positive Rates}\\
\toprule
& \multicolumn{4}{c}{Unweighted} & \multicolumn{4}{c}{Weighted (W)}\\
\midrule
Income \\ Bucket  & Unconstr. & SR PAR & TPR PAR & EO    & Unconstr. W & SR PAR W & TRP Par W & EO W  \\
\toprule
0                     & 0.000         & 0.000   & 0.000    & 0.000 & 0.000         & 0.000     & 0.000      & 0.000 \\
7                     & 0.015         & 0.015   & 0.015    & 0.000 & 0.012         & 0.012     & 0.012      & 0.000 \\
13                    & 0.008         & 0.008   & 0.008    & 0.000 & 0.007         & 0.007     & 0.007      & 0.000 \\
18                    & 0.018         & 0.018   & 0.018    & 0.000 & 0.027         & 0.027     & 0.027      & 0.000 \\
26                    & 0.045         & 0.045   & 0.045    & 0.000 & 0.056         & 0.056     & 0.056      & 0.000 \\
36                    & 0.019         & 0.019   & 0.019    & 0.000 & 0.025         & 0.025     & 0.025      & 0.000 \\
47                    & 0.027         & 0.027   & 0.027    & 0.000 & 0.040         & 0.040     & 0.040      & 0.000 \\
62                    & 0.007         & 0.007   & 0.007    & 0.000 & 0.012         & 0.012     & 0.012      & 0.000 \\
86                    & 0.001         & 0.001   & 0.001    & 0.000 & 0.002         & 0.002     & 0.002      & 0.000 \\
126                   & 0.000         & 0.000   & 0.000    & 0.092 & 0.000         & 0.000     & 0.000      & 0.117\\
\bottomrule
\end{tabular}
\caption{We present the true positive rates by income bucket for the audit allocations generated from unconstrained and fairness-constrained random forest classifier models on the \emph{test} set, where an audit allocation corresponds to the highest ranked predictions from each model up to 0.644\% of the taxpayer population (i.e. around 1.1M audits). Unconstr. refers to an unconstrained model, SR PAR to selection rate parity, TPR PAR to true positive rate parity, and EO to equalized odds. Note that the only column where we would expect to see equalized true positive rates are the true positive rate parity (TPR PAR) equalized odds (EO) columns. The top table represents results from in-process fairness methods, and the lower table from post-process fairness enforcement methods. The numbers in the left side (left four columns) of the table corresponds to the calculation on the raw data, without sample weights, and the right four columns display the calculation weighted by the sample weights, denoted with W. We present the unweighted calculation as the fairness methods do not guarantee equalized true positive rates over the weighted data, but rather only on the unweighted---however, true positive rates are not equalized over income deciles in either calculation scheme. Income buckets are given in thousands.
\label{app:tab:tpr_budgeted}}
\end{table}

\begin{table}[]
\begin{tabular}{l|l|l|ll}
              & DP Enforc. & TPR Enforc. & EO Enforc. & EO Enforc. \\
              \toprule
Income Bucket & SRP        & TPR         & TPR        & FPR        \\\midrule
0             & 0.348      & 0.979       & 0.981      & 0.006      \\
7            & 0.348      & 0.981       & 0.980      & 0.009      \\
            & 0.349      & 0.983       & 0.982      & 0.013      \\
18            & 0.348      & 0.985       & 0.985      & 0.007      \\
26           & 0.367      & 0.986       & 0.986      & 0.006      \\
36           & 0.350      & 0.982       & 0.982      & 0.005      \\
47            & 0.368      & 0.993       & 0.993      & 0.004      \\
62            & 0.368      & 0.996       & 0.995      & 0.004      \\
86           & 0.368      & 0.996       & 0.996      & 0.004      \\
126          & 0.366      & 0.990       & 0.991      & 0.003     
\end{tabular}
\caption{We present a verification of the fact that in-process fairness techniques work as billed. From left to right, we have the selection rate by income bucket in the equalized selection rate model, the true positive rate by income bucket in the true positive parity constrained model, and the true and false positive rates by income bucket in the equalized odds constrained model. All results are presented over \emph{all predictions} in the \emph{training set}, not over an allocation the size of 0.644\% of taxpayer population (i.e. about 1.1M audits), as in the majority of the paper. This is in order to verify the guarantees the in-processing method implemented in \textit{FairLearn} actually provides, which is that the model will satisfy the fairness constraint desired \emph{in expectation on the training set}, within error $2(\epsilon$ + best\_gap), where best\_gap is a determined at run-time and not released to the model users, and $\epsilon$ is a user-set slack parameter. We set the slack parameter to 1\% in our implementation. Note that for each metric presented, all rates across income buckets are within 2\% of each other. Thus, the fairness metrics are satisfied within the expected parameters of
 $2(\epsilon)\leq$ $2(\epsilon $+ best\_gap). Income buckets are given in thousands. \label{app:tab:sanity_check}}

\end{table}

\subsection{Post-processing}
Post-processing involves intervening at prediction time by developing group-specific thresholds for positive predictions on top of the original model to ensure a model's predictions satisfy the relevant fairness constraints. We use a method developed by Hardt et al~\cite{hardtequality2016} to implement this technique.\\

\textit{Implementation.} In post-processing 
methods, the base random forest model is trained exactly as described in Section~\ref{app:exp_details}. We again use \textit{FairLearn}~\cite{bird2020fairlearn} to implement the post-processing technique based upon Hardt et al.~\cite{hardt2016equality}. Post-processing methods as implemented in \textit{FairLearn} are not engineered to return a ranking but only a binary prediction, thus in order to accommodate creating a ranking from predictions, we multiply the binary predictions of the fair classifier (which satisfy the desired metric across groups) by the predicted probabilities from the baseline classifier in order to be able to meaningfully rank the output. 

\paragraph{Results}
\begin{figure}
    \centering
    \includegraphics[scale=0.2]{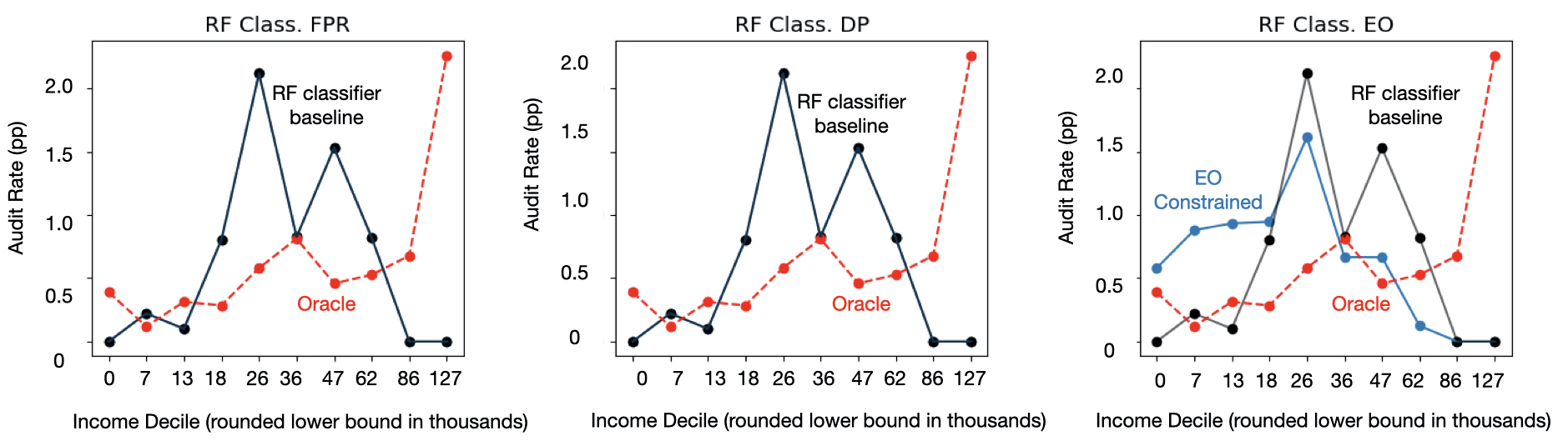}
    \caption{Post-process fairness techniques imposed on a random forest model. From left to right: enforcing Equal True Positive Rates (FP), Demographic Parity (DP), and Equalized Odds (EO). Each blue graph depicts of the results of enforcing a fairness constraint, the black graph is the original allocation.}
    \label{fig:post}
\end{figure}
Figure~\ref{fig:post} displays audit rate by income for post-processed Random Forest classifiers to respect each of the three fairness metrics. Again, the constrained model's audit rates are in blue, the unconstrained in black, and the oracle in red dashed. The revenue, no-change rate, and cost of each are also displayed in Table ~\ref{table:all_nums}. 

A key takeway is that {post-processing techniques are ill-fit to the audit allocation problem as they often result in minimal changes to prediction on the most confidently predicted points, which can leave aggregate audit allocations \emph{unchanged} from the unconstrained model}.
Figure \ref{fig:post} shows that the audit selection from post-processed models often lead to no change in aggregate audit rates (demographic parity, true positive rate parity). This is likely due to the fact that re-drawing group-specific thresholds to determine a final prediction which satisfies a fairness constraint is less likely to affect the most confidently predicted points, which we select for the top $0.644\%$. This is by design to keep error to a minimum, and to keep the post-processed model as similar to the original model as possible~\cite{hardt2016equality}.

In terms of the equalized odds allocations suggested by the post-processed random forest model, it is unclear what benefits enforcing these constraints provides, as they do not satisfy the respective fairness definitions on the top 0.644\% of predictions, as is noticable from the demographic parity allocation (which does not change from the baseline model). 
Additionally, enforcing equalized odds actually substantially increases audit focus on the lower end of the income distribution through this method, so we do not reduce audit focus on lower income individuals.

Thus, post-processing techniques are technically mismatched for the budgeted audit selection setting, and we argue, do not lead to an increase in equity.

\paragraph{Fair Ranking and Pre-Processing.}
We omit two major alternative categories of methods: \emph{pre-processing} and \emph{fair ranking}. Pre-processing methods alter the data before model training; this may be as simple as re-sampling the data or as involved as learning alternative representations of data that obfuscate any correlation between outcomes and sensitive features. Such methods tend to have sharp tradeoffs with accuracy~\cite{lamba2021empirical}, and often sacrifice interpretability, which may limit applicability in this setting. Fair ranking methods attempt to achieve fairness guarantees in settings where the \emph{ranking} of individuals matter.\cite{celis2017ranking}, \cite{singh2018fairness} While this may appear related to the audit problem, an important distinction is that in the fair ranking problem, the relative placement of items matters even beyond the decision to include or exclude them from some selection set. This is a more difficult setting than the audit problem as defined in Section \ref{sec:background}, in which the precise ranking \emph{within} audited taxpayers and separately \emph{within} non-audited taxpayers does not matter\footnote{ This may be less true if the \emph{budget} is not known in advance, but we do not consider such a scenario here.} to the IRS (nor does it matter to the taxpayers). Hence, methods aimed at fair ranking are `overkill' for our setting. 

\section{Revenue-Optimal Problem as Fractional Knapsack}
\label{app:cost_eqs}
Given audit variables $a_i$, net revenues $r_i$, costs $c_i$ and weights $w_i$, and a budget A, the revenue-optimal selection of audits is described by the following LP:
\begin{equation*}
\begin{array}{ll@{}ll}
\text{maximize}  & \displaystyle\sum\limits_{j=1}^{m} a_{j}&r^{net}_{j} \\
\text{subject to}&  \displaystyle\sum\limits_{j=1}^{m} a_{j}&c_{j} \leq A \\ 
                 &&a_{i} \in [0,w_i], &\forall  a_i
\end{array}
\end{equation*}

Note that this is simply an instantiation of the fractional knapsack problem, which is often intuitively described as, given an option of several items with different values and weights,
choosing a subset of $x$ items to put into a ``knapsack'' in order to maximize the value in the knapsack given the constraint of how much a person can carry (where, in the fractional approximation, one is allowed to put a fraction of the item in the knapsack). The analogues here is the audit allocation is our knapsack, taxpayers are items to put in the knapsack, total net revenue is the value, and the cost of each taxpayer audit to the IRS is the weight. The optimal solution to this problem is a greedy selection of the objects with the best value per unit weight, i.e., in our setting, taxpayers in order of the ratio of their net tax liability returned to the IRS over the cost to the IRS to audit that individual. 

\section{Cost Calculations}
\label{app:cost_cac}
We base our estimate of cost off of:\\
    (examiner time spent on an audit)*(cost per time unit of that grade examiner)\footnote{We note that this data recorded is grade of the lead examiner, but in some cases multiple people of different grades are involved. This is a shortcoming of the data for determining cost.} averaged over income decile and \emph{activity code} groups, which approximately corresponds to groupings of individuals based upon what tax forms they have filled out.
Importantly, we base our calculation of audit cost off of \emph{operational} IRS audits, i.e., not audits completed as a part of the National Research Program (NRP), but rather those conducted explicitly to enforce the tax code and reclaim misreported revenue. This is due to the fact that audits used for NRP are conducted differently, using more time-consuming methods, and thus relying on these cost estimates may provide a skewed picture of monetary cost to the IRS. We winsorize cost to 1st and 99th percentiles.
To calculate a dollar audit budget, we calculate the yearly cost of audits using our cost metrics from operational audit data from 2010-2014, and then we average this result by five to get the average dollar cost per year in amounts proportional to our conception of cost.
    \end{document}